\def\cref@section@alias{appendix}
\def\cref@subsection@alias{appendix}
\def\cref@subsubsection@alias{appendix}
\DeclareMathOperator*{\argmax}{arg\,max}
\DeclareMathOperator*{\argmin}{arg\,min}
\renewcommand{\epsilon}{\varepsilon}
\newtheoremstyle{spaced}
  {6pt}   %
  {0pt}   %
  {\itshape} %
  {}       %
  {\bfseries} %
  {.}      %
  {0.5em}  %
  {}
\theoremstyle{spaced}
\newcommand{\algcommentlight}[1]{\textcolor{blue!70!black}{\transparent{0.5}\small{\texttt{\textbf{//\hspace{2pt}#1}}}}}
\DeclarePairedDelimiter{\abs}{\lvert}{\rvert} %
\DeclarePairedDelimiter{\crl}{\{}{\}}
\DeclarePairedDelimiter{\prn}{(}{)}
\DeclarePairedDelimiter{\ang}{\langle}{\rangle}
\DeclarePairedDelimiterX{\infdiv}[2]{(}{)}{%
  #1\;\delimsize\|\;#2%
}
\newcommand{\wb}[1]{\widebar{#1}}
\def\ddefloop#1{\ifx\ddefloop#1\else\ddef{#1}\expandafter\ddefloop\fi}
\def\ddef#1{\expandafter\def\csname bb#1\endcsname{\ensuremath{\mathbb{#1}}}}
\def\ddefloop#1{\ifx\ddefloop#1\else\ddef{#1}\expandafter\ddefloop\fi}
\def\ddef#1{\expandafter\def\csname b#1\endcsname{\ensuremath{\mathbf{#1}}}}
\def\ddef#1{\expandafter\def\csname sf#1\endcsname{\ensuremath{\mathsf{#1}}}}
\def\ddef#1{\expandafter\def\csname c#1\endcsname{\ensuremath{\mathcal{#1}}}}
\def\ddef#1{\expandafter\def\csname h#1\endcsname{\ensuremath{\widehat{#1}}}}
\def\ddef#1{\expandafter\def\csname hc#1\endcsname{\ensuremath{\widehat{\mathcal{#1}}}}}
\def\ddef#1{\expandafter\def\csname t#1\endcsname{\ensuremath{\widetilde{#1}}}}
\def\ddef#1{\expandafter\def\csname tc#1\endcsname{\ensuremath{\widetilde{\mathcal{#1}}}}}
\def\ddefloop#1{\ifx\ddefloop#1\else\ddef{#1}\expandafter\ddefloop\fi}
\def\ddef#1{\expandafter\def\csname scr#1\endcsname{\ensuremath{\mathscr{#1}}}}
\let\oldparagraph\paragraph
\renewcommand{\paragraph}[1]{\oldparagraph{#1}}
\renewcommand{\epsilon}{\varepsilon}
\newcommand{\ldef}{\vcentcolon=}
\renewcommand{\bigm}[1]{%
  \ifcsname fenced@\string#1\endcsname
    \expandafter\@firstoftwo
  \else
    \expandafter\@secondoftwo
  \fi
  {\expandafter\amsmath@bigm\csname fenced@\string#1\endcsname}%
  {\amsmath@bigm#1}%
}
\newcommand{\DeclareFence}[2]{\@namedef{fenced@\string#1}{#2}}
\let\save@mathaccent\mathaccent
\newcommand*\if@single[3]{%
  \setbox0\hbox{${\mathaccent"0362{#1}}^H$}%
  \setbox2\hbox{${\mathaccent"0362{\kern0pt#1}}^H$}%
  \ifdim\ht0=\ht2 #3\else #2\fi
  }
\newcommand*\rel@kern[1]{\kern#1\dimexpr\macc@kerna}
\newcommand*\widebar[1]{\@ifnextchar^{{\wide@bar{#1}{0}}}{\wide@bar{#1}{1}}}
\newcommand*\wide@bar[2]{\if@single{#1}{\wide@bar@{#1}{#2}{1}}{\wide@bar@{#1}{#2}{2}}}
\newcommand*\wide@bar@[3]{%
  \begingroup
  \def\mathaccent##1##2{%
    \let\mathaccent\save@mathaccent
    \if#32 \let\macc@nucleus\first@char \fi
    \setbox\z@\hbox{$\macc@style{\macc@nucleus}_{}$}%
    \setbox\tw@\hbox{$\macc@style{\macc@nucleus}{}_{}$}%
    \dimen@\wd\tw@
    \advance\dimen@-\wd\z@
    \divide\dimen@ 3
    \@tempdima\wd\tw@
    \advance\@tempdima-\scriptspace
    \divide\@tempdima 10
    \advance\dimen@-\@tempdima
    \ifdim\dimen@>\z@ \dimen@0pt\fi
    \rel@kern{0.6}\kern-\dimen@
    \if#31
      \overline{\rel@kern{-0.6}\kern\dimen@\macc@nucleus\rel@kern{0.4}\kern\dimen@}%
      \advance\dimen@0.4\dimexpr\macc@kerna
      \let\final@kern#2%
      \ifdim\dimen@<\z@ \let\final@kern1\fi
      \if\final@kern1 \kern-\dimen@\fi
    \else
      \overline{\rel@kern{-0.6}\kern\dimen@#1}%
    \fi
  }%
  \macc@depth\@ne
  \let\math@bgroup\@empty \let\math@egroup\macc@set@skewchar
  \mathsurround\z@ \frozen@everymath{\mathgroup\macc@group\relax}%
  \macc@set@skewchar\relax
  \let\mathaccentV\macc@nested@a
  \if#31
    \macc@nested@a\relax111{#1}%
  \else
    \def\gobble@till@marker##1\endmarker{}%
    \futurelet\first@char\gobble@till@marker#1\endmarker
    \ifcat\noexpand\first@char A\else
      \def\first@char{}%
    \fi
    \macc@nested@a\relax111{\first@char}%
  \fi
  \endgroup
}
\newcommand{\Rand}{\textsc{Rand}\xspace}
\newcommand{\Coreset}{\textsc{Coreset}\xspace}
\newcommand{\Marg}{\textsc{Uncertainty}\xspace}
\newcommand{\badge}{\textsc{Badge}\xspace}
\newcommand{\Ours}{\textsc{Ours}\xspace}
\newcommand{\distmath}{\mathsf{dist}}
\definecolor{cyanblue}{RGB}{0,175,210}
\newcolumntype{?}{!{\vrule width 1.2pt}}
\title{Towards Multimodal Active Learning:\\ Efficient Learning with Limited Paired Data}
\author{
Jiancheng Zhang\\
{\normalsize University of California, Riverside}\\
{\normalsize\texttt{jzhan745@ucr.edu}}
\and
Yinglun Zhu\textsuperscript{\dag}\\
{\normalsize University of California, Riverside}\\
{\normalsize\texttt{yzhu@ucr.edu}}
}
\begin{document}

\maketitle
\begingroup
\renewcommand\thefootnote{}\footnotetext{\textsuperscript{\dag}Project lead and corresponding author.}
\endgroup

\begin{abstract}

Active learning (AL) is a principled strategy to reduce annotation cost in data-hungry deep learning. However, existing AL algorithms focus almost exclusively on unimodal data, overlooking the substantial annotation burden in multimodal learning. We introduce the first framework for \emph{multimodal active learning with unaligned data}, where the learner must actively acquire cross-modal alignments rather than labels on pre-aligned pairs. This setting captures the practical bottleneck in modern multimodal pipelines, where unimodal features are easy to obtain but high-quality alignment is costly. We develop a new algorithm that combines uncertainty and diversity principles in a modality-aware design, achieves linear-time acquisition, and applies seamlessly to both pool-based and streaming-based settings. Extensive experiments on benchmark datasets demonstrate that our approach consistently reduces multimodal annotation cost while preserving performance; for instance, on the ColorSwap dataset it cuts annotation requirements by up to $40\%$ without loss in accuracy.

\end{abstract}

\section{Introduction}
\label{sec:intro}

Deep learning has achieved remarkable success across a wide range of applications, but its effectiveness often hinges on access to large amounts of annotated training data.  
Active learning (AL) has long been viewed as a promising approach to reduce annotation cost by selectively querying the most \emph{informative} instances for labeling \citep{settles2009active}.  
Theoretically, AL can \emph{exponentially} reduce the amount of labeled data required \citep{zhu2022active}, and empirically, it has delivered consistent gains in data efficiency for deep models \citep{sener2018active, Ash2020Deep, citovsky2021batch, saran2023streaming, zhang2024labelbench}, as well as more recently for large language models \citep{margatina2023active, bhatt2024experimental, yuan2024hide}.  

Despite recent progress, most AL methods operate in a \emph{unimodal and unidirectional} setting, where the learner queries class labels for given features inputs.
Multimodal extensions remain limited: \citet{shen2023towards} studies multimodal AL with \emph{pre-aligned} vision-language pairs, effectively reducing the task to unidirectional AL on composite inputs. 
\citet{zhang2024learnability} applies AL to a specific multimodal task---video captioning---which similarly treats one modality as input and the other as annotation.
As a result, existing approaches sidestep the core challenge of actively discovering cross-modal correspondences in unaligned data.
\looseness=-1

\looseness=-1

In this paper, we introduce the first setting of \emph{multimodal active learning with unaligned data}, where the learner begins with independent vision and language features and must actively acquire cross-modal alignments.  
Unlike unimodal AL or pre-aligned multimodal AL, our setting requires deciding both \emph{which modality to query from} and \emph{how to align instances across modalities}.  
This formulation raises two qualitatively new challenges:  
(i) \emph{bidirectional alignment}, since annotation can begin from either vision-to-language or language-to-vision, and different choices lead to distinct annotation sets and learning trajectories; and  
(ii) a \emph{large cross-modal candidate space}, since evaluating the utility of an instance requires reasoning over potential matches across the entire other modality, which may contain millions of unique candidates \citep{gadre2024datacomp}.  
Naively scoring all image–text pairs scales quadratically, making classical AL strategies computationally infeasible.  

Our setting is directly motivated by modern multimodal pipelines, where raw modality-specific features can be obtained cheaply at scale, but high-quality alignment is expensive, domain-specific, and often the true bottleneck \citep{gadre2024datacomp, bai2024survey}.  
This challenge is especially acute in specialized domains (or when adapting pretrained models to such domains), such as medical imaging \citep{chen2024alifuse} and autonomous driving \citep{ge2023metabev}, where multimodal annotation is both costly and essential.

\paragraph{Our contributions.}  
Our main contributions are as follows:  
\begin{enumerate}[label=(\roman*)]  
\item We introduce the problem of \emph{multimodal active learning with unaligned data}, clarifying how it differs fundamentally from unimodal AL and existing multimodal AL with pre-aligned data.  
\item We develop a new algorithm that integrates uncertainty and diversity principles in a modality-aware design, achieves \emph{linear-time} complexity in the number of unaligned instances, and applies seamlessly to both \emph{pool-based} and \emph{streaming-based} scenarios.  
\item We conduct extensive experiments on benchmark datasets, demonstrating consistent annotation savings (up to $40\%$ on ColorSwap) while maintaining competitive performance.  
\end{enumerate}

\paragraph{Paper organization.}  
The remainder of the paper is organized as follows.
\Cref{sec:related} reviews related work.
\Cref{sec:setting} introduces our problem formulation and highlights the unique challenges of multimodal active learning with unaligned data.
\Cref{sec:methods} presents our algorithm, together with its complexity analysis and extensions.
\Cref{sec:exs} reports the main experimental results, and \cref{sec:discussion} provides additional analysis.
\Cref{sec:conclusion} concludes the paper.
Proofs, implementation details, and supplementary experiments are deferred to the Appendix.
\looseness=-1

\section{Related Work}
\label{sec:related}

\paragraph{Active Learning.}
Active learning (AL) studies how to train accurate models with fewer annotations by selectively querying informative samples \citep{settles2009active}.
A substantial body of theoretical work establishes provable advantages of AL over passive learning under various assumptions
\citep{castro2007minimax, balcan2007margin, dasgupta2009analysis, hanneke2014theory, zhang2014beyond, krishnamurthy2019active, puchkin2021exponential, zhu2022efficient, zhu2022active}.
On the empirical side, AL has been shown to improve data efficiency in deep learning, including batch-mode methods and streaming settings
\citep{sener2018active, Ash2020Deep, citovsky2021batch, ash2021gone, wang2022deep, saran2023streaming,zhang2024labelbench},
and more recently for large pretrained models \citep{margatina2023active, bhatt2024experimental, yuan2024hide}.
\looseness=-1

Most existing AL methods focus on the \emph{unimodal} setting with \emph{unidirectional annotation}: 
given unlabeled feature representations, the learner selectively queries class labels from a small, fixed label set.
Multimodal AL is far less explored.
The most relevant effort is the \emph{pre-aligned} multimodal AL framework of \citet{shen2023towards},
which \emph{assumes} that vision-language pairs are \emph{already aligned} and therefore restricts queries to class labels for these aligned pairs.  
Since the alignment is free, the problem effectively reduces to unidirectional AL on composite inputs rather than addressing the core challenge of discovering cross-modal correspondences.  
Other applications of AL to multimodal tasks, such as video captioning \citep{zhang2024learnability}, follow a similar path: one modality (e.g., video) is treated as input while the other (e.g., text) serves as annotation, keeping the process strictly unidirectional and structurally unimodal.

In contrast, we introduce the first multimodal AL setting that supports \emph{bidirectional alignment} with unaligned data: the learner is provided with independent vision and language features and must actively acquire meaningful cross-modal correspondences, either from images to text or from text to images.  
This setting is directly motivated by modern multimodal pipelines such as CLIP \citep{radford2021learning} and SigLIP \citep{zhai2023sigmoid}, where unimodal features are easy to obtain at scale, but high-quality alignment is expensive, domain-specific, and often the true bottleneck.

\paragraph{Multimodal learning.}  
Multimodal learning seeks to integrate information from diverse modalities such as text, images, and audio to improve learning performance \citep{baltruvsaitis2018multimodal, liang2024foundations}.  
Early approaches relied on supervised labels, where multimodal features were annotated with classification labels, making the process closely resemble standard unimodal learning.  
More recently, multimodal learning has shifted toward supervision from paired multimodal data, where one modality provides a supervision signal for another \citep{zong2024self}.  
A prominent example is CLIP \citep{radford2021learning} and its variants \citep{zhai2022lit, zhai2023sigmoid}, which leverage paired image-text data to train contrastive objectives that align representations across modalities. 

Many recent works study problems related to multimodal alignment but with goals that differ from active learning with unaligned data. For example, \citet{han2024learning} focuses on correcting mismatched cross-modal pairs under the assumption that most pairs are already correctly aligned, while \citet{kimura2025infomae} studies aligning independently pretrained unimodal encoders without performing cross-modal data alignment.

While large-scale, noisily aligned multimodal data can be scraped from the web, it is increasingly recognized that training high-performing multimodal models requires high-quality, well-aligned datasets \citep{gadre2024datacomp, bai2024survey}.  
This need is even more pronounced in specialized domains such as medical imaging \citep{chen2024alifuse} and autonomous driving \citep{ge2023metabev}, where careful multimodal annotation is both critical and costly.  
These challenges highlight the importance of developing efficient methods that can learn effectively from fewer paired examples.  
Although recent work has explored active learning for multimodal tasks with label annotations \citep{shen2023towards}, to the best of our knowledge, our work is the first to design a multimodal active learning algorithm specifically tailored for \emph{pairing annotation}.

\paragraph{Data selection.}  
Data selection is closely related to active learning, aiming to construct a high-quality subset of data for more efficient or effective model training.  
The key distinction lies in the availability of labels or pairings: active learning selects data points \emph{before} annotation, whereas data selection assumes a fully labeled or paired dataset.  
From this perspective, active learning is strictly more challenging, as it must operate without access to labeling or pairing information.  

Data selection methods have been shown to reduce training cost \citep{schreiber2020apricot, mindermann2022prioritized, sorscher2022beyond, yang2023dataset, shen2024efficient}, and in some cases even improve performance by removing duplicated or noisy data \citep{lee2022deduplicating, tirumala2023d4, xia2024less}.  
In the multimodal domain, researchers have proposed metrics such as CLIPScore \citep{hessel2021clipscore} and its extensions \citep{wang2024finetuned, wang2024cliploss, joshi2024data} to evaluate the quality of image-ext pairs, enabling filtering of low-quality examples for data selection.  
While CLIPScore-based filtering has proven useful for multimodal data selection \citep{schuhmann2021laion}, it assumes access to pre-paired multimodal data and is therefore unsuitable for our setting with \emph{unaligned} modalities.

\section{Problem Setting}
\label{sec:setting}

We study multimodal learning with a dataset $\cD = \prn{\cD^{v}, \cD^{l}}$, where $\cD^{v} = \crl{x_i^{v}}_{i=1}^{n}$ denotes the collection of raw vision features and $\cD^l = \crl{x_i^l}_{i=1}^n$ denotes the collection of raw textual/language features.\footnote{For simplicity, we focus on the vision-language case. Our setting and algorithms naturally extend to general multimodal learning with more than two modalities; see \cref{sec:extensions}.}  
Unlike standard multimodal setups, \emph{the vision and language features are initially unaligned}.  
The learner may query a subset of instances to obtain their aligned pairs at an annotation cost.  
Specifically, for any data point $x^k_i \in \crl{x_i^v, x_i^l}$, the learner can spend one unit of \emph{annotation cost} to reveal its aligned pair $x_i \ldef \prn{x_i^v, x_i^l}$.  
We use $\cS = \crl{\prn{x_i^{v}, x_i^{l}}}_{i=1}^m$ to denote the set of annotated pairs obtained with a total of $m$ units of cost.  

The goal, under a fixed annotation budget, is to \emph{actively and strategically} select an informative subset $\cS$ to maximize the quality of a multimodal model $\phi \ldef \prn{\phi^v, \phi^l}$, where $\phi^v, \phi^l: \bbR^{d^\prime} \rightarrow \bbR^d$ are encoders that map raw features into a shared representation space.  
We adopt CLIP-style contrastive training for multimodal models \citep{radford2021learning}, and, following standard practice \citep{zhai2022lit, zhai2023sigmoid}, evaluate model quality on downstream tasks.  

We refer to this setup as \emph{multimodal active learning with unaligned data}, which not only extends classical unimodal active learning \citep{sener2018active, Ash2020Deep, citovsky2021batch, saran2023streaming} but also departs fundamentally from prior multimodal active learning frameworks that assume \emph{pre-aligned} data \citep{shen2023towards}.
The active learning algorithm proceeds over $T \in \bbZ_+$ iterations.  
At iteration $t$, the learner selects and annotates a batch of $B$ data points $\crl{\prn{x_{t_i}^v, x_{t_i}^l}}_{i=1}^B$, and updates the annotation set as $\cS_t \gets \cS_{t-1} \cup \crl{\prn{x_{t_i}^v, x_{t_i}^l}}_{i=1}^B$.  
The multimodal model $\phi_t = \prn{\phi_t^v, \phi_t^l}$ is trained on $\cS_t$ and then used to guide data selection in the next iteration.  
This iterative process enables the learner, under a fixed budget, to build a high-quality multimodal model from strategically chosen alignments.  

Depending on how the learner accesses the unaligned pool, we study two regimes:
\begin{itemize}[leftmargin=10pt, itemindent=*]
  \item \textbf{Pool-based multimodal active learning.}  
  The learner has full access to $\cD = \prn{\cD^v, \cD^l}$ throughout the process and can query any instances at any $t \in [T]$ \citep{Ash2020Deep}.
  \item \textbf{Streaming-based multimodal active learning.}  
  The data arrives as disjoint subsets $\cD = \crl{\cD_t}_{t=1}^T$ with $\cup_{t=1}^T \cD_t = \cD$ and $\cD_{t_i} \cap \cD_{t_j} = \emptyset$.  
  At iteration $t$, the learner only observes $\cD_t = \prn{\cD_t^v, \cD_t^l}$ and must select and annotate data \emph{within} this batch.  
  Unqueried data from $\cD_t$ cannot be revisited in future iterations \citep{saran2023streaming}.
\end{itemize}

\paragraph{Additional notation.}
For any $N \in \bbZ_+$, we denote $[N] \ldef \crl{1, \cdots, N}$.  
For multimodal sets $\cD = \crl{\cD^v, \cD^l}$ and $\cS = \crl{\cS^v, \cS^l}$, we write $\cD \setminus \cS \ldef \crl{\cD^v \setminus \cS^v, \cD^l \setminus \cS^l}$.  
For any modality $k \in \crl{v, l}$, we denote $\phi^k_t(\cS_t^k) \ldef \crl{\phi_t^k(x^k): x^k \in \cS_t^k}$.  
When clear, we use the shorthand $\phi_t(\cS_t^k) = \phi^k_t(\cS_t^k)$ or $\phi(\cS_t^k) = \phi^k_t(\cS_t^k)$. 

\subsection{Unique Challenges with Unaligned Multimodal Data}
\label{sec:unique}

Compared to unimodal active learning \citep{sener2018active, Ash2020Deep} and multimodal active learning with \emph{pre-aligned} data \citep{shen2023towards}, our setting---multimodal active learning with \emph{unaligned} data---poses qualitatively new challenges.  

In unimodal active learning, each instance $x$ has a single feature vector and annotation assigns a class label from a small predefined set.  
In multimodal active learning with \emph{pre-aligned} data, the learner selects from pre-aligned pairs $\crl{(x_i^v, x_i^l)}_{i=1}^n$ for label queries.  
Because modalities are already aligned, this effectively reduces to unimodal active learning on composite inputs.  

By contrast, in our unaligned setting with vision features $\{x_i^v\}_{i=1}^n$ and language features $\{x_i^l\}_{i=1}^n$, the learner must simultaneously decide \emph{which modality to query from} and \emph{how to align instances across modalities}.  
This creates two distinctive challenges:  

\begin{itemize}[leftmargin=10pt, itemindent=*]
  \item \textbf{Bidirectional alignment.}  
  With unaligned data, annotation may begin from either vision-to-language or language-to-vision.  
  Crucially, the learner does not know in advance which instance from the other modality will be paired.  
  Different alignment directions can lead to entirely different annotation sets, adding an extra decision layer absent in unimodal or pre-aligned multimodal AL.  
  \looseness=-1

  \item \textbf{Large cross-modal candidate space.}  
  Even after choosing a modality to query, the utility of a candidate must be evaluated against the entire other modality.  
  For instance, querying an image requires scoring potential matches across all texts, which effectively serves as an enormous candidate label space.  
  Unlike conventional class labels, these candidates are instance-specific and extremely numerous (e.g., 12.8M unique texts in DataComp \citep{gadre2024datacomp}).  
  Naively evaluating all pairs scales quadratically with dataset size, quickly becoming infeasible.  
\end{itemize}

Thus, multimodal active learning with \emph{unaligned} data requires acquisition algorithms that handle both bidirectional alignment and the large cross-modal search space efficiently.  
We present our approach to these challenges in \cref{sec:methods}.

\section{Methods}
\label{sec:methods}
We present our approaches to address the unique challenges mentioned in \cref{sec:unique}. 
We first introduce our multimodal active learning algorithm for the pool-based setting in \cref{sec:multimodal_AL}, which is further extended to the streaming-based setting and to settings beyond vision-language models in \cref{sec:extensions}.

\subsection{Multimodal Active Learning}
\label{sec:multimodal_AL}

We present our multimodal active learning algorithm in \cref{alg:multimodal_AL}, 
which proceeds iteratively for $T$ iterations. 
At each iteration $t \in [T]$, \cref{alg:multimodal_AL} selects a batch of $B$ data points for annotation, based on the multimodal model $\phi_{t-1} = \prn{\phi^v_{t-1}, \phi_{t-1}^l}$ trained with respect to previously annotated data points $\cS_{t-1}$. 
The annotation set is then updated to $\cS_t$,  
and the model $\phi_t = (\phi_t^v, \phi_t^l)$ is retrained on the updated annotation dataset to guide data selection in the next iteration.

The core of \cref{alg:multimodal_AL} lies in how to actively select data for annotation. This is achieved via three integrated steps: (1) modality selection, (2) coreset construction, and (3) uncertainty-based selection.
At each iteration, \cref{alg:multimodal_AL} first selects a modality that is \emph{underrepresented} by the already annotated data $\cS_{t-1}$ (Step 1). It then constructs a coreset of $B_C$ data on the selected modality to ensure \emph{coverage} (Step 2), and finally selects $B \leq B_C$ highly \emph{uncertain} data points from this coreset using a cross-modal uncertainty score (Step 3), 
which quantifies how confidently a feature in one modality matches candidates from the other.

 \begin{algorithm}[htbp]
	\caption{Multimodal Active Learning}
    \label{alg:multimodal_AL}
	\renewcommand{\algorithmicrequire}{\textbf{Input:}}	\renewcommand{\algorithmicensure}{\textbf{Output:}}
	\newcommand{\algorithmicbreak}{\textbf{break}}
    \newcommand{\BREAK}{\STATE \algorithmicbreak}
	\begin{algorithmic}[1]
\REQUIRE 
    Unaligned multimodal dataset $\cD = \crl{\cD^v, \cD^l}$, number of iterations $T$, per-round selection size $B$, coreset hyperparameter $B_C \geq B$.
    \STATE Initialize multimodal model $\phi_0 = \crl{\phi_0^v, \phi_0^l}$ with random or pretrained weights.
    \STATE Initialize the annotation set $\cS_0 = \emptyset$. \label{line:initialize_set}
\FOR{$t = 1, \cdots, T$}
    \STATE Consider unaligned data pool $\cD_t \ldef \cD \setminus \cS_{t-1}$.
    \label{line:unaligned_data}
    \STATE \textbf{Step 1: Modality selection.}
    \label{line:modality_selection}
    Select modality $k_t \ldef \argmax_{k \in \crl{v, l}} d_t^k$ that is less-covered by previous annotations $\cS_{t-1}$, where  
\begin{equation}
\label{eq:maxmin_dist}
  d^k_t \ldef \max_{z_i \in \wb \phi_{t-1}(\cD_{t}^{k_t})} 
  \, \min_{z_j \in \wb \phi_{t-1}(\cS^{k_t}_{t-1}) } 
  \distmath (z_i, z_j).
\end{equation}
    \STATE \textbf{Step 2: Coreset construction.} On modality $k_t$, construct a coreset $\cC_t^{k_t} \subseteq \cD_t^{k_t}$ of size $B_C$ such that, together with $\cS_{t-1}^{k_t}$, it maximally covers $\cD_t^{k_t}$: \label{line:coreset}
\begin{equation}
\label{eq:coreset}
  \cC_t^{k_t} \ldef \argmin_{\cC: \abs{\cC} = B_C}\,  \max_{z_i \in \phi_{t-1}(\cD_t^{k_t} \setminus \cC)} \, \min_{z_j \in \phi_{t-1}(\cS_{t-1}^{k_t} \cup \cC)} \distmath(z_i, z_j).
\end{equation}
    \algcommentlight{\cref{eq:coreset} can be approximated with an efficient greedy algorithm (\cref{alg:k-center}).}

    \STATE \textbf{Step 3: Uncertainty-based selection.}
    Within coreset $\cC_t^{k_t}$, select the top-$B$ \emph{most uncertain} data points using multimodal model $\phi_{t-1}$.  \label{line:step3}
\STATE    Let $m \ldef \crl{v, l} \setminus \crl{k_t}$ denote the unselected modality in Step 1. 
    \STATE For each data $x_i^{k_t} \in \cC_t^{k_t}$, compute its margin score $u(x_i^{k_t}) \ldef w^{i}_{(1)} - w^i_{(2)}$, which serves as an uncertainty measure. Here $w^i \in \bbR^{\abs{\cD_t^m}}$ denotes the vector of similarity scores between $x_i^{k_t}$ and all unaligned features in the other modality $\cD_t^{m}$, and $w^i_{(j)}$ denotes the $j$-th largest entry of $w^i$.   \label{line:compute_margin}
    \algcommentlight{Compute margin score as an uncertainty measure.}
    \STATE Select the subset $\crl{{x_i^{k_t}}}_{i=1}^B \subseteq \cC_t^{k_t}$ with the top-$B$ uncertainty scores (i.e., lowest margins), annotate them, and update $\cS_t \gets \cS_{t-1} \cup \crl{\prn{x_i^v, x_i^l}}_{i=1}^B$. 
    \label{line:data_selection}

    \STATE  \textbf{Model update.} Train multimodal model $\phi_t = \prn{\phi^v_t, \phi_t^l}$ on the updated annotation set $\cS_t$. \label{line:train_encoders}
\ENDFOR
    \ENSURE Actively trained multimodal model $\phi_T = \prn{\phi_T^v, \phi_T^l}$.
	\end{algorithmic}
\end{algorithm}

Our \cref{alg:multimodal_AL} integrates \emph{both diversity and uncertainty principles} into multimodal active learning with unaligned data. 
By restricting cross-modal uncertainty evaluation to the coreset constructed in Step 2, our algorithm
enables \emph{efficient data selection} with per-round runtime that scales \emph{linearly} in $\abs{\cD}$.
In contrast, a naive uncertainty-based selection over the entire dataset would require \emph{quadratic} time in $\abs{\cD}$,
which is computationally prohibitive.

We next explain the details of each of the three steps in \cref{alg:multimodal_AL}.

\paragraph{Step 1: Modality selection.}
This step selects the modality that is \emph{underrepresented} with respect to the current annotation $\cS_{t-1}$ (line \ref{line:modality_selection} in \cref{alg:multimodal_AL}).
To assess coverage, for each modality $k \in \crl{v, l}$,  
we compute the maximum distance of unaligned features $\cD_t^k \ldef \cD^k \setminus \cS_{t-1}^k$  
to their nearest neighbors in $\cS_{t-1}^k$:  
\begin{equation}
  d^k_t \ldef \max_{z_i \in \wb \phi_{t-1}(\cD_{t}^{k_t})} 
  \min_{z_j \in \wb \phi_{t-1}(\cS^{k_t}_{t-1}) } 
  \distmath (z_i, z_j),
  \nonumber
\end{equation}
where $\distmath$ is a distance metric,
and $\wb \phi(x)$ denotes normalized embeddings to ensure comparability across modalities.  
The modality with the largest distance value (i.e., least covered),  
$k_t \ldef \argmax_{k \in \crl{v, l}} d_t^k$,  
is selected for coreset construction in the next step.  
This step has a runtime upper bound of $O(\abs{\cD_t} \cdot \abs{\cS_{t-1}})$.

\paragraph{Step 2: Coreset construction.}
Given the selected modality $k_t$, in Step 2 (line \ref{line:coreset} of \cref{alg:multimodal_AL}), we construct a coreset $\cC_t^{k_t} \subseteq \cD_t^{k_t}$ of size $B_C \geq B$ that, when combined with already annotated data points $\cS_{t-1}^{k_t}$, maximally covers the unaligned data $\cD_t^{k_t}$. This is formalized in \cref{eq:coreset} of \cref{alg:multimodal_AL}.
Solving \cref{eq:coreset} exactly is NP-Hard \citep{cunninghamcombinatorial}.  
Following prior work in diversity-based active learning in the unimodal setting \citep{sener2018active},  
we employ a greedy approximation algorithm (\cref{alg:k-center})  
that guarantees a $2 \times \mathsf{OPT}$ solution.
\cref{alg:k-center} can be implemented in $O\left((B_C + \abs{\cS_{t-1}}) \cdot \abs{\cD_t}\right)$ time using a distance caching strategy: initially, we compute and cache the minimum distances between all candidate points in $\cD_t$ and the selection set $\cS_{t-1}$,  
incurring a runtime of $O(\abs{\cD_t} \cdot \abs{\cS_{t-1}})$.  
For subsequent iterations in the greedy selection process,  
we only need $O(\abs{\cD_t})$ operations to update the cache and select the next point.

\paragraph{Step 3: Uncertainty-based selection.} With the constructed coreset $\cC_t^{k_t}$, in Step 3 (lines \ref{line:step3}-\ref{line:data_selection} of \cref{alg:multimodal_AL}), we compute the margin score as the a measure of uncertainty and select the top-$B$ data points with the highest uncertainty (i.e., lowest margin scores) for multimodal annotation.\footnote{Although we adopt margin as the uncertainty measure in this step, our algorithm can be easily extended to alternative choices such as entropy or confidence. We use margin because it has been observed to consistently yield strong performance across active learning strategies \citep{zhang2024labelbench}.}
For each $x_i^{k_t} \in \cC_t^{k_t}$,  
we compute a vector of similarity scores $w^i \in \bbR^{\abs{\cD_t^m}}$  
between $x_i^{k_t}$ and all unaligned features in the other modality $m \ldef \crl{v, l} \setminus \crl{k_t}$.  
The calculation of similarity score is usually model-dependent; but a simple example is the inner product between representations (or its variants), e.g.,
$ w^i_j \ldef \ang{\phi_{t-1}^{k_t}(x_i^{k_t}), \phi_{t-1}^{m}(x_j^m)}$ for $x_j^m \in \cD_t^m$.
The uncertainty score is computed as the margin between the top two similarity scores: 
$u(x_i^{k_t}) \ldef w^i_{(1)} - w^i_{(2)}$.
We select the $B$ data points in $\cC_t^{k_t}$ with the \emph{lowest} margin scores.  
This step has a runtime of $O(B_C \cdot \abs{\cD_t})$.

 \begin{algorithm}[htbp]
	\caption{Greedy Approximation for Coreset Construction}
    \label{alg:k-center}
	\renewcommand{\algorithmicrequire}{\textbf{Input:}}	\renewcommand{\algorithmicensure}{\textbf{Output:}}
	\newcommand{\algorithmicbreak}{\textbf{break}}
    \newcommand{\BREAK}{\STATE \algorithmicbreak}
	\begin{algorithmic}[1]
\REQUIRE 
    Selected modality $k_t$, unaligned multimodal dataset $\cD_t^{k_t}$, selection size $B_C$, multimodal model $\phi_{t-1}$, current annotation set $\cS_{t-1}$.
    \STATE Initialize coreset set $\cC_t^{k_t} = \emptyset$. 
\WHILE{$\abs{\cC_t^{k_t}} < B_C$}

    \STATE  Select a data point $z_u$ such that:
\begin{equation*}
  z_u \ldef  \argmax_{z_i \in \phi_{t-1}(\cD_t^{k_t})} \, \min_{z_j \in \phi_{t-1}(\cS_{t-1}^{k_t})} \distmath(z_i, z_j).
\end{equation*}

    \STATE Update $\cC_t^{k_t} \gets \cC_t^{k_t} \cup \{z_u\}$.
\ENDWHILE
    \ENSURE Coreset $\cC_t^{k_t}$.
	\end{algorithmic}
\end{algorithm}

\paragraph{Computational complexity.} 
Summing the runtime of Steps 1–3,  
the per-round data acquisition complexity of \cref{alg:multimodal_AL} is upper bounded by  
$O\left((B_C + \abs{\cS_{t-1}}) \cdot \abs{\cD_t}\right)$,
where $\abs{\cS_{t-1}} = O(tB)$ and $\abs{\cD_t} \leq \abs{\cD}$.  
Across $T$ rounds,  
the total complexity is upper bounded by  
$O(T \cdot (B_C + TB) \cdot \abs{\cD})$,
as formalized in \cref{pro:multi_AL}.  

Assuming $T= O(1)$ and $B_C \ll \abs{\cD}$, both per-round and total complexity is linear in $\abs{\cD}$, which is typically large in real-world multimodal learning tasks \citep{gadre2024datacomp}.\footnote{We focus on the data acquisition complexity and its dependency on the data pool size $\abs{\cD}$, which is usually large for multimodal learning; note that the model training complexity is the same for all active learning algorithms and it's proportional to the training data size $\abs{\cS_{t-1}}$.}  
In contrast, a naive uncertainty-based approach would compute margin scores  
for \emph{all pairs} across modalities in the unaligned data,  
resulting in a per-round complexity of $O(\abs{\cD}^2)$,  
which is computationally expensive.  
Our algorithm avoids this quadratic bottleneck  
by computing cross-modal uncertainty \emph{only over the coreset (Step 3)}.

\begin{restatable}{proposition}{propMultiAL}
\label{pro:multi_AL}
The per-round data acquisition complexity of \cref{alg:multimodal_AL} is upper bounded by $O\left((B_C + \abs{\cS_{t-1}}) \cdot \abs{\cD_t}\right)$,  
resulting in an overall complexity of  
$O(T \cdot (B_C + TB) \cdot \abs{\cD})$.
\end{restatable}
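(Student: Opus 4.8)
The plan is to bound the runtime of each of the three data-acquisition steps separately, add them to obtain the per-round bound, and then sum over the $T$ iterations using the growth of the annotated set to obtain the aggregate bound. Throughout I would treat a single distance evaluation $\distmath(\cdot,\cdot)$ and a single similarity-score evaluation as unit-cost primitives, folding the (fixed) representation dimension $d$ into the $O(\cdot)$ constants, since it is common to all acquisition strategies.

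First I would analyze the three steps. For Step 1, evaluating $d_t^k$ in \cref{eq:maxmin_dist} requires, for each unaligned point in $\cD_t^k$, its nearest-neighbor distance to $\cS_{t-1}^k$, i.e.\ $\abs{\cD_t^k}\cdot\abs{\cS_{t-1}^k}$ distance evaluations followed by an $\argmax$/$\min$ sweep of the same order; doing this for both $k\in\crl{v,l}$ costs $O(\abs{\cD_t}\cdot\abs{\cS_{t-1}})$. For Step 2, the greedy procedure of \cref{alg:k-center} runs exactly $B_C$ iterations, and each iteration selects the point of $\cD_t^{k_t}$ whose minimum distance to the current reference set is largest, which takes at most $\abs{\cD_t^{k_t}}\cdot\abs{\cS_{t-1}^{k_t}}$ distance evaluations plus an $O(\abs{\cD_t^{k_t}})$ sweep; across $B_C$ iterations this gives $O(B_C\cdot\abs{\cD_t}\cdot\abs{\cS_{t-1}})$, consistent with the $2\times\mathsf{OPT}$ greedy guarantee. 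For Step 3, each of the $B_C$ coreset points $x_i^{k_t}$ induces a similarity vector $w^i\in\bbR^{\abs{\cD_t^m}}$ against the opposite modality whose two largest entries are extracted in $O(\abs{\cD_t^m})=O(\abs{\cD_t})$ time, so summing over the coreset yields $O(B_C\cdot\abs{\cD_t})$.

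Adding the three contributions, the per-round cost is $O(\abs{\cD_t}\cdot\abs{\cS_{t-1}})+O(B_C\cdot\abs{\cD_t}\cdot\abs{\cS_{t-1}})+O(B_C\cdot\abs{\cD_t})$. Since $B_C\ge 1$ and $\abs{\cS_{t-1}}\ge 1$, the middle term dominates, establishing the per-round bound $O(B_C\cdot\abs{\cD_t}\cdot\abs{\cS_{t-1}})$. To aggregate, I would substitute the growth of the annotated set: each round annotates $B$ points, so $\abs{\cS_{t-1}}=(t-1)B=O(tB)$, while $\abs{\cD_t}\le\abs{\cD}$. Hence the round-$t$ cost is $O(B_C\cdot\abs{\cD}\cdot tB)$, and summing over $t\in[T]$ with $\sum_{t=1}^T t=\Theta(T^2)$ produces the overall bound $O(T^2\cdot B\cdot B_C\cdot\abs{\cD})$.

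I do not anticipate a genuine obstacle here, as this is a bookkeeping argument rather than a deep one; the only points requiring care are (i) confirming that confining cross-modal scoring to the size-$B_C$ coreset in Step 3 is precisely what replaces the naive $O(\abs{\cD}^2)$ all-pairs term by $O(B_C\cdot\abs{\cD_t})$, so that the greedy loop of Step 2 governs the per-round cost, and (ii) the arithmetic of the final summation, where the \emph{linear}-in-$t$ growth of $\abs{\cS_{t-1}}$ is exactly what converts the per-round bound into the quadratic $T^2$ factor rather than a linear one.
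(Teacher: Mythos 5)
Your proposal is correct and follows essentially the same route as the paper's proof: bounding Steps 1--3 at $O(\abs{\cD_t}\cdot\abs{\cS_{t-1}})$, $O(B_C\cdot\abs{\cD_t}\cdot\abs{\cS_{t-1}})$, and $O(B_C\cdot\abs{\cD_t})$ respectively, then summing over rounds with $\abs{\cS_{t-1}}=B(t-1)$ and $\abs{\cD_t}\leq\abs{\cD}$ to get the $T^2$ factor. No gaps.
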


\subsection{Extensions of \cref{alg:multimodal_AL}}
\label{sec:extensions}

\paragraph{Streaming-based multimodal active learning.} 
While \cref{alg:multimodal_AL} is originally designed for the pool-based setting,  
it can be easily adapted to streaming-based multimodal active learning.  
As introduced in \cref{sec:setting},  
in the streaming-based setting, the learner only has access to the current batch of stream data $\cD_t$.  
To adapt \cref{alg:multimodal_AL} to this setting,  
we simply replace line~\ref{line:unaligned_data} with the current batch of stream data $\cD_t$,  
and leave all other parts of the algorithm unchanged.  
As a result, aside from potential variation in the size of $\cD_t$,
the per-round computational complexity remains the same as in the pool-based case.

\paragraph{Active learning beyond vision-language models.}
Although our focus in this paper is on multimodal learning with vision-language data,  
\cref{alg:multimodal_AL} can be naturally extended to general multimodal settings  
with $m \geq 3$ unaligned modalities $\cD = \crl{\cD^{(1)}, \cD^{(2)}, \cdots, \cD^{(m)}}$. 
To support this generalization, we modify 
Step 1 and Step 3 of the algorithm while keeping 
Step 2 unchanged, since it operates solely within the selected modality. 
We discuss these changes below:
\begin{itemize}
[leftmargin=10pt, itemindent=*]
    \item 
For Step 1, we generalize the modality selection in \cref{eq:maxmin_dist} to select the least-covered modality \emph{among all $m$ modalities}, i.e., $k_t \ldef \argmax_{k \in \crl{1,2,...,m}} d_t^k$.
\item  For Step 3,  
given any data point $x^{k_t} \in \cD_t^{k_t}$ in the selected modality,  
let $u^j(x^{k_t})$ denote its cross-modal uncertainty score with respect to each unselected modality $j \in [m] \setminus \crl{k_t}$.  
We then define the \emph{overall uncertainty score} as  
$u(x^{k_t}) \ldef \sum_{j \in [m] \setminus \crl{k_t}} u^j(x^{k_t})$,
and use this score for uncertainty-based selection.  
\end{itemize}
Assuming $m = O(1)$,  
the computational complexity of this generalized algorithm remains order-wise the same  
as the analysis provided in \cref{pro:multi_AL}.

\section{Experiments}
\label{sec:exs}
We conduct extensive experiments to evaluate the effectiveness of our proposed algorithm. The experimental setup is described in \cref{sec:ex_setup}, followed by the main results and analyses in \cref{sec:main_results}. We defer additional implementation details and experimental results to \cref{app:experiments}.

\subsection{Experimental Setups}
\label{sec:ex_setup}
\paragraph{Datasets.} We conduct experiments on three multimodal datasets: ColorSwap \citep{burapacheep2024colorswap}, MS-COCO \citep{lin2014microsoft}, and DataComp \citep{gadre2024datacomp}. 
ColorSwap is designed to evaluate object-color matching with color-swapped image-caption pairs. MS-COCO is a large-scale image-caption dataset designed for object detection. DataComp consists of large-scale image-text pairs collected from Common Crawl. See \cref{tab:metrics} for detailed descriptions of these datasets.
We use ColorSwap for pool-based AL, and MS-COCO and DataComp for streaming-based AL.
For ColorSwap and MS-COCO, we initialize models from pretrained weights and study multimodal active learning in the \emph{finetuning} regime.  
For DataComp, we initialize models from random weights to evaluate multimodal active learning in the \emph{pretraining} regime.  

\begin{table}[t] \caption{Datasets used in our experiments and their corresponding evaluation metrics.} \label{tab:metrics}  \centering 

\begin{tabular}{c c  c } \toprule Datasets & Evaluation metrics & $\#$Data samples \\  \midrule ColorSwap &Scores: text, image, group \citep{burapacheep2024colorswap} & 1400\\ 
MS-COCO & R@1: I $\rightarrow$ T, T $\rightarrow$ I \citep{zhai2023sigmoid} & 118K\\ 
DataComp & Average score over 38 tasks \citep{gadre2024datacomp}  & 12.8M\\

\bottomrule \end{tabular} 
\end{table}

\paragraph{Baselines and models.}  
We compare our algorithm against three baselines: \Rand, \Coreset, and \Marg.  
\Rand serves as a passive learning baseline, randomly selecting data pairs for annotation.  
Since multimodal active learning with unaligned data is a new problem, no existing baselines directly apply.  
To better assess the performance of our \cref{alg:multimodal_AL}, we construct two additional baselines by adapting widely used unimodal AL methods---a diversity-based method (\Coreset, \citep{sener2018active}) and an uncertainty-based method (\Marg, \citep{settles2009active})---to our multimodal setting with unaligned data.\footnote{Full algorithmic details of \Coreset and \Marg are provided in \cref{app:exp_details_baselines}. We also report additional results for other adapted unimodal active learning algorithms in \cref{app:additional_baseline}.}  

We implement our algorithm and all baselines using the CLIP model \citep{radford2021learning} and its variants SigLIP \citep{zhai2023sigmoid} and LiT \citep{zhai2022lit}, evaluating across models of different sizes.  
Detailed hyperparameter settings are reported in \cref{sec:appendix_Hyperparameter_Settings}.

\paragraph{Evaluation metrics.}  
We adopt standard evaluation metrics for each dataset.  
For ColorSwap, we report the text score, image score, and group score, as proposed by \citet{burapacheep2024colorswap}.  
For MS-COCO, we report recall@1 for both image-to-text and text-to-image retrieval, as commonly used in the literature \citep{zhai2023sigmoid}.  
For DataComp, we follow \citet{gadre2024datacomp} and report the average score across 38 downstream tasks.  
All results are averaged over 4 random runs, with shaded regions in plots indicating ${2}/{3}$ of a standard deviation.  

We use ColorSwap dataset to study the pool-based setting, and MS-COCO and DataComp to study the streaming-based setting. The streaming buffer size $\abs{\cD_t}$ is set as 1024 for MS-COCO and 2048 for DataComp; 
additional experiments with different streaming buffer sizes are provided in \cref{app:exp_streaming_batch}.
The per-round data acquisition size $B$ is 70 for ColorSwap, 256 for MS-COCO, and 512 for DataComp.

\begin{figure}[t]
\centering
\includegraphics[width=\textwidth]{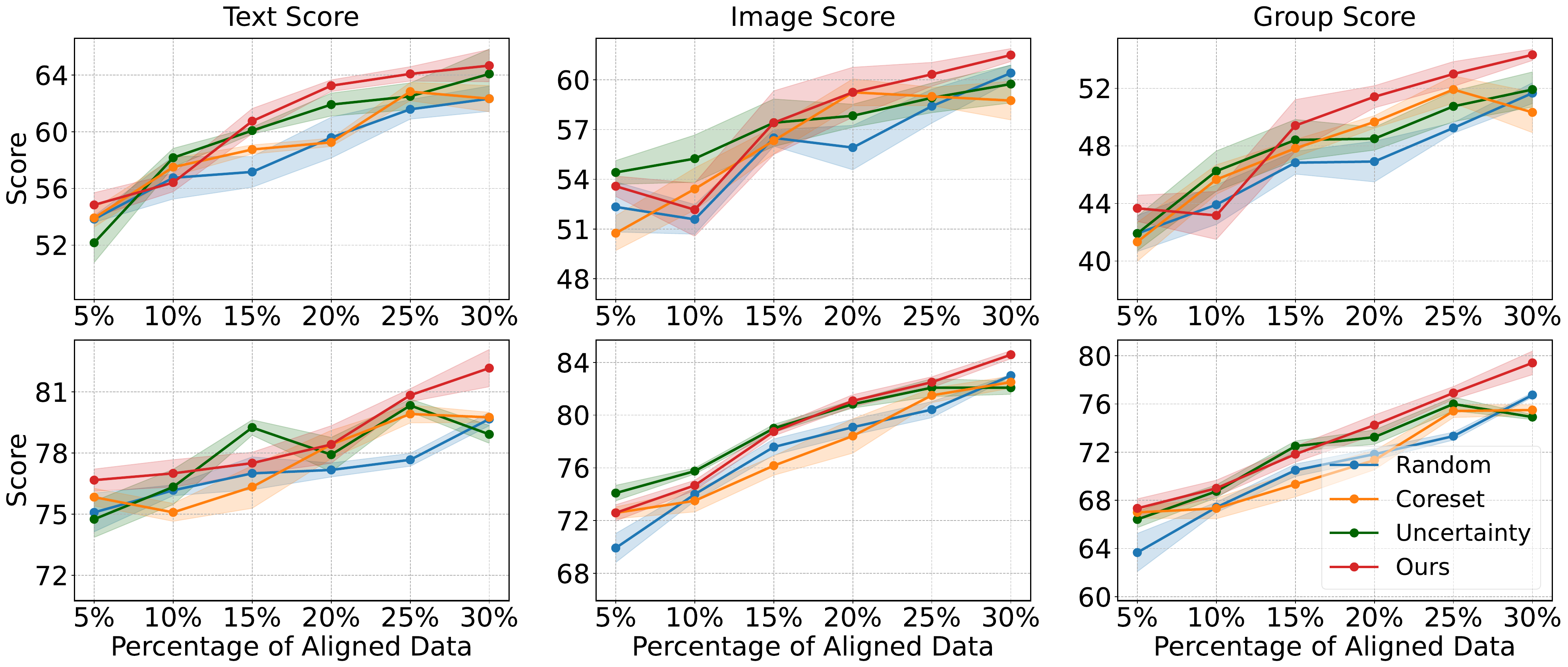}
\caption{Results of pool-based multimodal active learning on the ColorSwap dataset with CLIP-B32 (\emph{top}) and SigLIP-B16 (\emph{bottom}). We report text score (\emph{left}), image score (\emph{middle}), and group score (\emph{right}) as learning progresses.
}
\label{fig:color}
\end{figure}
\subsection{Main Results}
\label{sec:main_results}

\paragraph{Pool-based multimodal active learning.}
We compare \cref{alg:multimodal_AL} against three baselines on the ColorSwap dataset.  
As shown in \cref{fig:color}, our algorithm generally outperforms the baselines across all three metrics.  
Notably, with CLIP-B32, \cref{alg:multimodal_AL} achieves a group score of $49.42$ using only $15\%$ of the data, which is comparable to the group score reached by \Rand at $25\%$, corresponding to a $40\%$ reduction in annotation cost.  
Relative to \Marg, our algorithm achieves a group score of $51.42$ with $20\%$ of the data, while \Marg requires $30\%$ to reach a similar score—representing a $33\%$ reduction in cost.  
In addition to superior data efficiency, \cref{alg:multimodal_AL} is computationally more efficient than \Marg, which incurs higher complexity (\cref{sec:multimodal_AL}).

\begin{figure}[tbp]
\centering
\includegraphics[width=\textwidth]{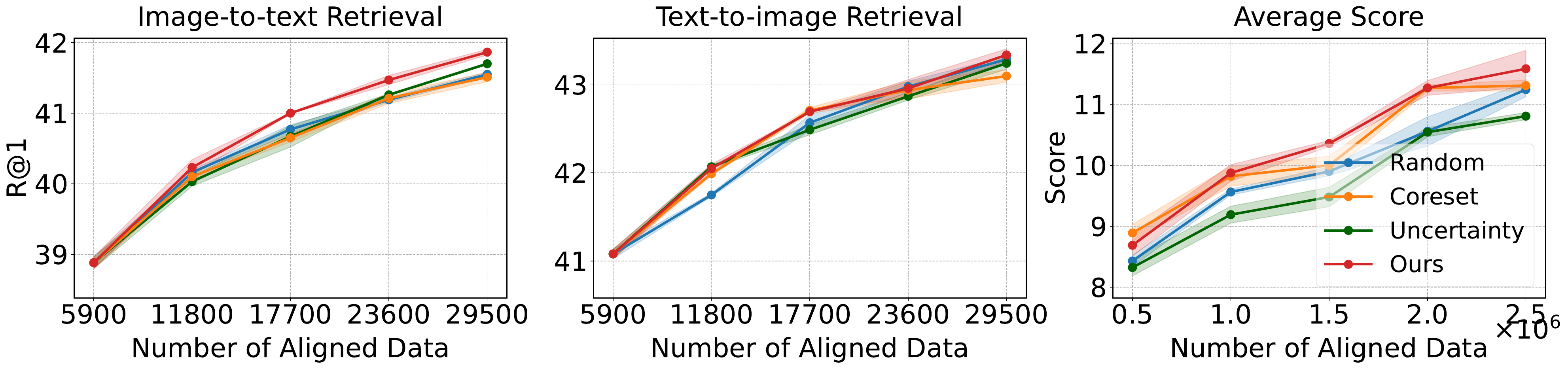}
\caption{Streaming-based multimodal active learning with the MS-COCO (\emph{left and middle}) and DataComp (\emph{right}) datasets using CLIP-B32.  
We report R@1 (image-to-text) (\emph{left}), R@1 (text-to-image) (\emph{middle}), and the average score  across 38 downstream tasks (\emph{right}).  
We report algorithm performance as learning progresses.
}

\label{fig:COCO_clip_}
\end{figure}
\begin{figure}[tbp]
\centering
\includegraphics[width=\textwidth]{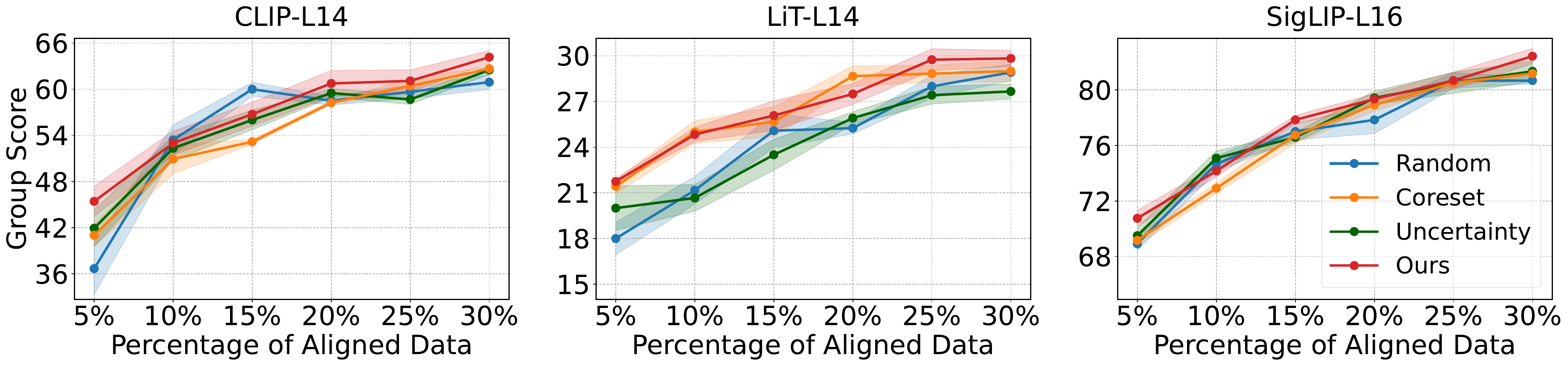}
\caption{Group scores on the ColorSwap dataset in the pool-based setting, using CLIP-L14 (\emph{left}), LiT-L14 (\emph{middle}), and SigLIP-L16 (\emph{right}).}  
\label{fig:color_n}
\end{figure}

\paragraph{Streaming-based multimodal active learning.} 
We next evaluate \cref{alg:multimodal_AL} in the streaming-based setting on MS-COCO and DataComp datasets.  
On MS-COCO (left and middle panels of \cref{fig:COCO_clip_}), our algorithm consistently outperforms all baselines across both retrieval metrics.  
For example, on R@1 (image-to-text), \cref{alg:multimodal_AL} achieves a score of $41.47$ using $23{,}600$ samples ($20\%$ of the data), matching the performance of \Coreset and \Rand with $25\%$ of the data.  
This corresponds to a $20\%$ reduction in annotation cost.  

To examine performance at larger scales, we further conduct experiments on the DataComp dataset (right panel of \cref{fig:COCO_clip_}), reporting the average score across 38 downstream tasks. 
Our algorithm outperforms all baselines, with particularly large margins over \Rand and \Marg.  
Relative to \Coreset, \cref{alg:multimodal_AL} achieves clear improvements as the number of aligned pairs grows, reaching a score of $11.59$ with $2.5$M pairs.
For context, training on the full $12.8$M aligned pairs (the performance skyline) yields a score of $13.20$ \citep{gadre2024datacomp}.
Thus, our method attains $87.80\%$ of the skyline with just $2.5$M pairs, whereas \Coreset reaches only $85.68\%$ of the skyline (with a score of 11.31).

\paragraph{Robustness across CLIP variants and model sizes.}  
To evaluate the robustness of \cref{alg:multimodal_AL} across architectures and model sizes, we conduct additional experiments with multiple CLIP variants, including SigLIP \citep{zhai2023sigmoid} and LiT \citep{zhai2022lit}, as well as models of different sizes.  
Given the high runtime cost of the streaming-based setting, most experiments are performed in the pool-based scenario.  
\cref{fig:color_n} reports group scores for large-scale variants, where \cref{alg:multimodal_AL} consistently outperforms the baselines.  
Results for text and image scores are deferred to \cref{app:additional_exp}, and show similar trends.

\section{Analyses and Ablations}
\label{sec:discussion}
\begin{table}[tbp]
\centering
\caption{Ablation study of \cref{alg:multimodal_AL} with different modality selection strategies in the pool-based setting using CLIP-B32 and SigLIP-B16.  
We report group scores with $30\%$ of aligned data.}
\label{tab:ablation_1}
\begin{tabular}{lcccc}
\toprule
\textbf{Model} & \textbf{Random} & \textbf{Text-only} & \textbf{Image-only}  & \textbf{\Ours} \\
\midrule
CLIP-B32 & 52.4\scriptsize$\pm$3.7 & 52.8\scriptsize$\pm$1.4 & 50.7\scriptsize$\pm$2.3 & \textbf{54.3\scriptsize$\pm$1.2} \\
SigLIP-B16  & 75.50\scriptsize$\pm$2.1  & 77.67\scriptsize$\pm$2.9 & 76.89\scriptsize$\pm$1.9 & \textbf{79.42\scriptsize$\pm$2.9} \\
\bottomrule
\end{tabular}
\end{table}

\begin{table}[tbp]
\centering
\caption{Case study of \cref{alg:multimodal_AL}, recording margin scores at Step~3 under the pool-based setting with CLIP-B32.  
We report average margin scores (scaled by $\times 10^{-2}$) for both correctly and incorrectly matched groups (with respect to the ground-truth alignments) across different percentages of aligned data pairs.}  
\begin{tabular}{lcccccc}
\toprule
\textbf{Percentage of Aligned Data} & \textbf{5\%} & \textbf{10\%} & \textbf{15\%} & \textbf{20\%} & \textbf{25\%} & \textbf{30\%} \\
\midrule
Correctly matched group  & 3.8\scriptsize$\pm$0.2 &4.0\scriptsize$\pm$0.1  & 4.2\scriptsize$\pm$0.4 & 4.7\scriptsize$\pm$0.2 & 4.8\scriptsize$\pm$0.2 & 5.2\scriptsize$\pm$0.2 \\
Incorrectly matched group&0.9\scriptsize$\pm$0.1  &  1.0\scriptsize$\pm$0.2&1.3\scriptsize$\pm$0.1  &1.4\scriptsize$\pm$0.2 &1.4\scriptsize$\pm$0.1  &1.5\scriptsize$\pm$0.5  \\
\bottomrule
\end{tabular}
\label{tab:margin_scores_in_appendix}
\end{table}

\paragraph{Effectiveness of modality selection.}  
\begin{figure}[t]
\centering
\includegraphics[height=3.8cm]{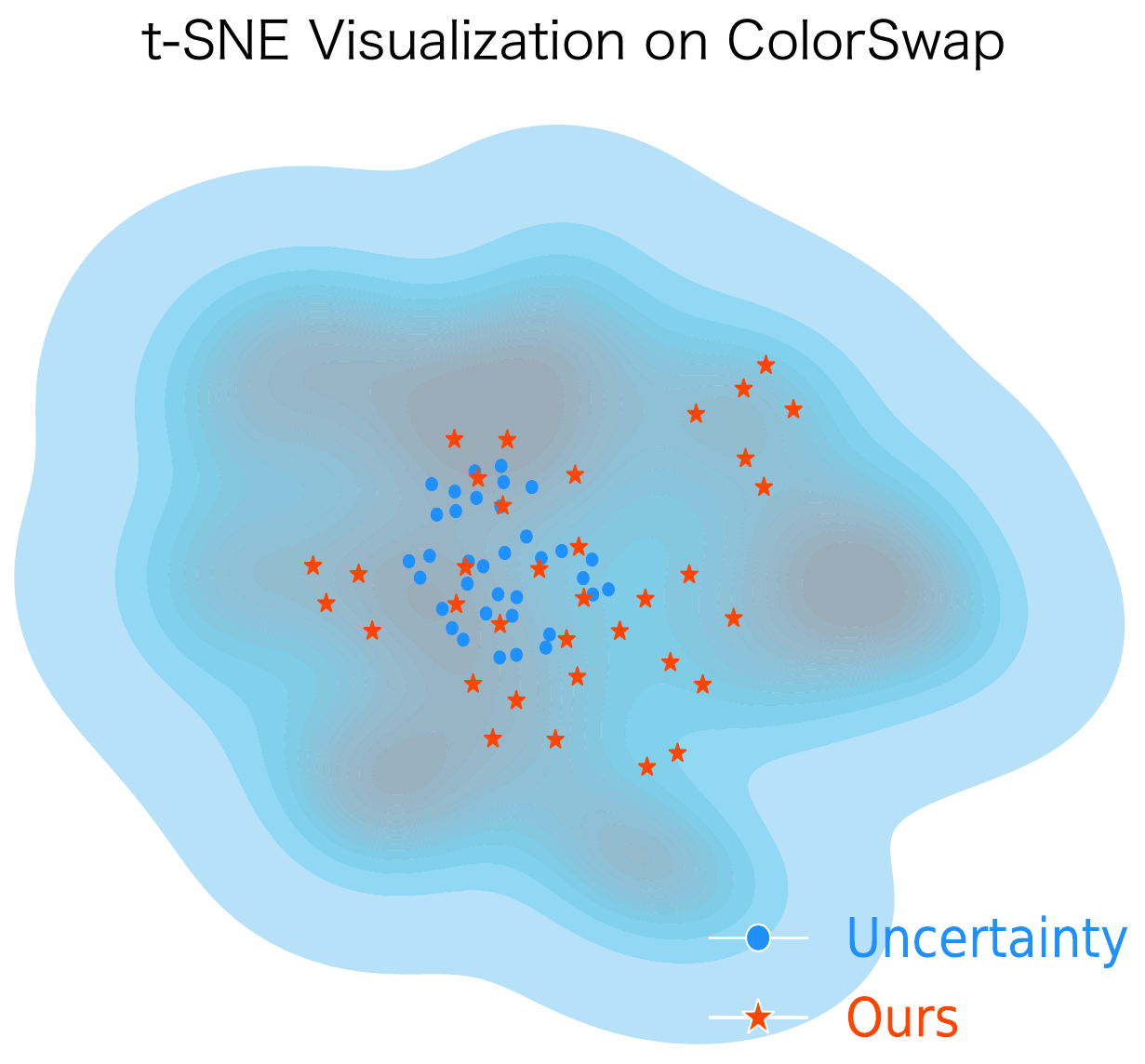}\hfill
\includegraphics[height=4.5cm]{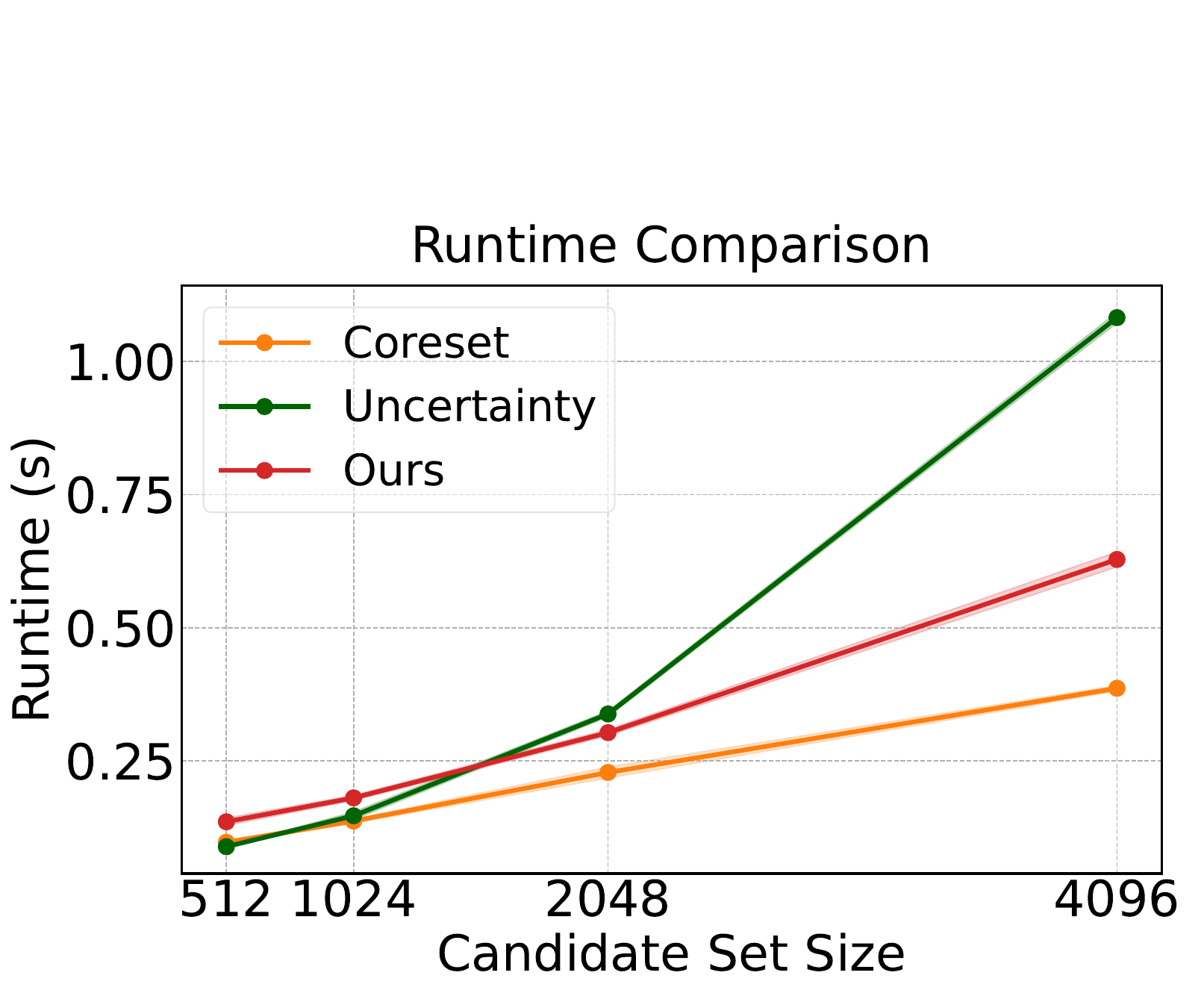}\hfill
\includegraphics[height=3.56cm]{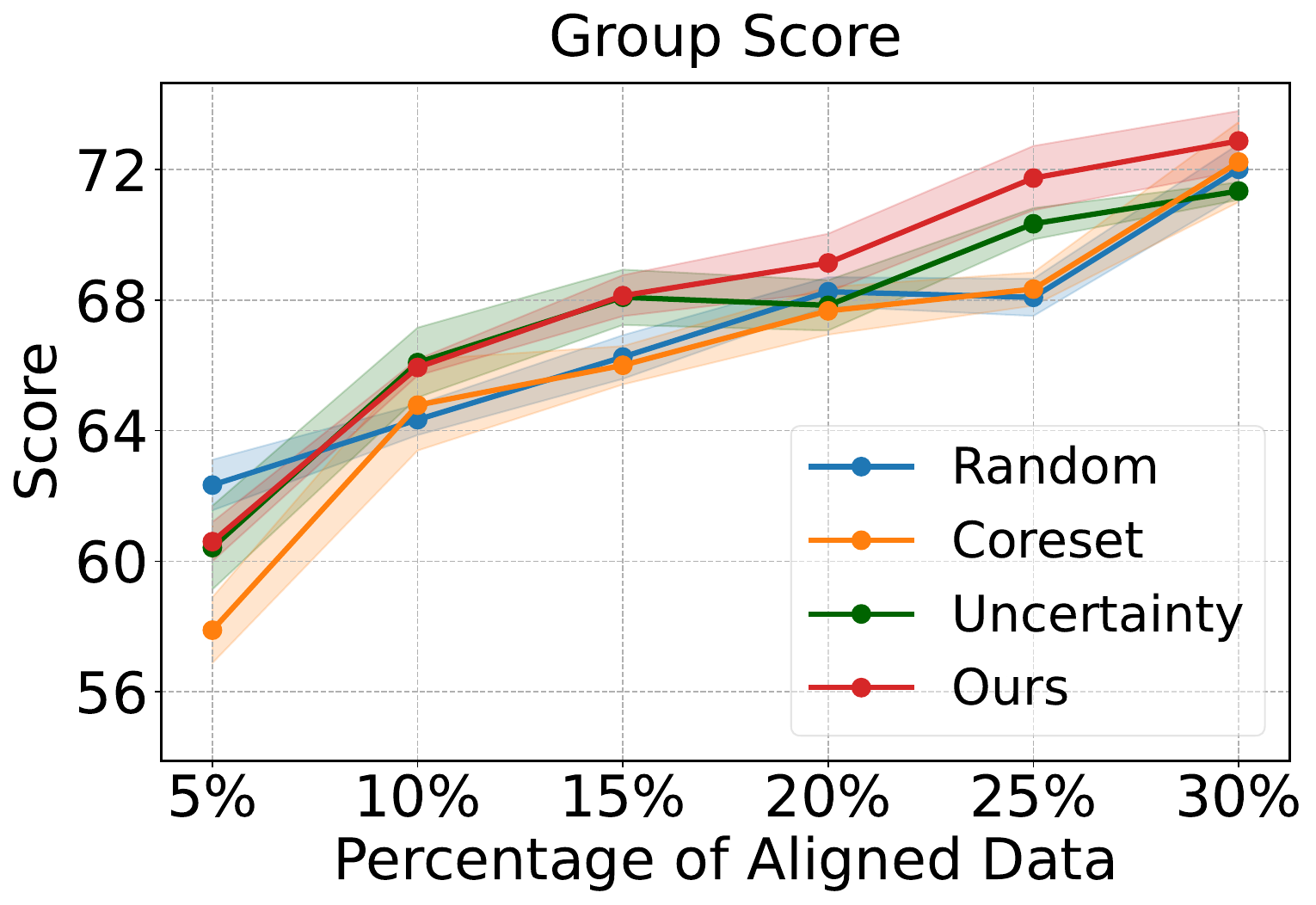}\hfill
\caption{\emph{Left:} t-SNE visualization of image-modality embeddings from the ColorSwap dataset, comparing our method (\cref{alg:multimodal_AL}) with an uncertainty-based baseline (\Marg).  
  Points selected by the baseline are shown as blue circles, while points selected by our method are shown as red stars.  
  Blue density contours represent the distribution of all data. 
   \emph{Middle:} Per-round data acquisition time (seconds) across candidate set sizes for different acquisition strategies using CLIP-B32. 
 \emph{Right:} Robustness to missing data and alignments on ColorSwap (SigLIP-B16): we remove 50\% of images while keeping all captions, creating many unmatched captions. All methods query labels from the image modality. }
\label{fig:analysis_1}
\end{figure}

We first assess the effectiveness of the modality selection strategy in \cref{alg:multimodal_AL} (Step 1).  
Following the same experimental setup as in \cref{fig:color}, we compare our approach against three alternatives: randomly choosing a modality, always selecting the text modality, and always selecting the image modality.  
As reported in \cref{tab:ablation_1}, our proposed strategy achieves the best performance across both CLIP and SigLIP models.  
For example, it attains a group score of $54.3$ with the CLIP model, representing up to a $7\%$ improvement over the image-only strategy, with all other components held fixed.

\paragraph{Effectiveness of margin score in data selection.}

To evaluate the effectiveness of the margin score in \cref{alg:multimodal_AL} (Step 3), we analyze its behavior across different percentages of aligned data pairs. At each iteration, Step~3 computes pseudo-alignments for all unaligned data points by selecting their most likely match based on similarity.  
We then partition the data into correctly matched and incorrectly matched groups (with respect to ground-truth alignments) and report their average margin scores in \cref{tab:margin_scores_in_appendix}.  
As expected, incorrectly matched samples consistently exhibit lower margin scores than correctly matched ones, reflecting higher uncertainty and thus greater value for active selection.  
Since these pseudo-alignments are computed on unaligned data not yet used in training, the results confirm that the margin score provides a robust and meaningful uncertainty signal even in noisy, unaligned settings.
\looseness=-1

\paragraph{Synergy of uncertainty and diversity in data selection.}

As shown in \cref{sec:main_results}, \cref{alg:multimodal_AL} consistently outperforms all baselines.  
We attribute this advantage to its ability to prioritize data points that are both \emph{uncertain} (Step~3) and \emph{diverse} (Step~2).  
To test this hypothesis, we visualize the ColorSwap dataset using t-SNE on image-modality embeddings, comparing data selected by \cref{alg:multimodal_AL} with those chosen by \Marg.  
As shown in \cref{fig:analysis_1} (left), data points selected by \Marg (blue circles) tend to concentrate within a narrow uncertain region.
In extreme cases---for example, when multiple highly similar or duplicated data points are all uncertain---pure uncertainty-based selection may repeatedly select redundant samples, providing little additional learning benefit. 
In contrast, data points selected by \cref{alg:multimodal_AL} (red stars) not only concentrate near high-uncertainty regions but also spread more broadly across the embedding space, reflecting greater diversity.  
This joint emphasis on uncertainty and diversity provides a key advantage of \cref{alg:multimodal_AL}.  
\looseness=-1

\paragraph{Empirical data acquisition runtime comparison.}
\cref{fig:analysis_1} (middle) reports the per-round data acquisition time of each algorithm using CLIP-B32. 
 As expected, \cref{alg:multimodal_AL} is slower than the \Coreset baseline because it explicitly includes a coreset construction step. However, as the candidate set size increases, our algorithm becomes substantially more efficient than the \Marg baseline. 
 These runtime results are consistent with our theoretical analysis in \cref{sec:multimodal_AL} and confirm that our algorithm is more efficient than the Uncertainty baseline.

\paragraph{Evaluation under different metrics.}
To further assess alignment quality beyond the metrics used in ColorSwap, we additionally report pseudo-labeling accuracy and the GroupMatch score \citep{zhu2026test}. \Cref{tab:pseudo_label_groupmatch} summarizes results with CLIP-B32, where our method continues to outperform all baselines on both pseudo-labeling accuracy and GroupMatch, further supporting the effectiveness of the proposed approach.

\begin{table}[tbp]
\centering
\caption{Pseudo-labeling accuracy and GroupMatch score on ColorSwap using CLIP-B32.}
\begin{tabular}{lcccc}
\toprule
Metric & Random & Coreset & Uncertainty & Ours \\
\midrule
Pseudo-labeling accuracy & 79.81\scriptsize$\pm$0.73& 78.52\scriptsize$\pm$0.86 & 79.67\scriptsize$\pm$0.69 &  \textbf{80.39\scriptsize$\pm$0.71}  \\
GroupMatch score & 86.57\scriptsize$\pm$0.29  & 85.57\scriptsize$\pm$0.15 & 86.43\scriptsize$\pm$0.38 &  \textbf{87.14\scriptsize$\pm$0.22}\\
\bottomrule
\end{tabular}
\label{tab:pseudo_label_groupmatch}
\end{table}

\paragraph{Robustness to coreset hyperparameter $B_C$.}  
To examine robustness to the coreset hyperparameter $B_C$ (Step 2), we conduct experiments across model families (CLIP and SigLIP) and scales (CLIP-B32 and SigLIP-L16).  
As shown in \cref{fig:ablation_2}, performance remains stable across different values of $B_C$ and annotation costs, demonstrating that our method is robust to this parameter.  
In practice, we select the value of $B_C$ from the set $\crl{1.5B, 2B, 2.5B}$, depending on the dataset and model. We provide detailed hyperparameter selections 
in \cref{sec:appendix_Hyperparameter_Settings}.

\begin{figure}[tbp]
  \centering
  \includegraphics[width=0.7\textwidth]{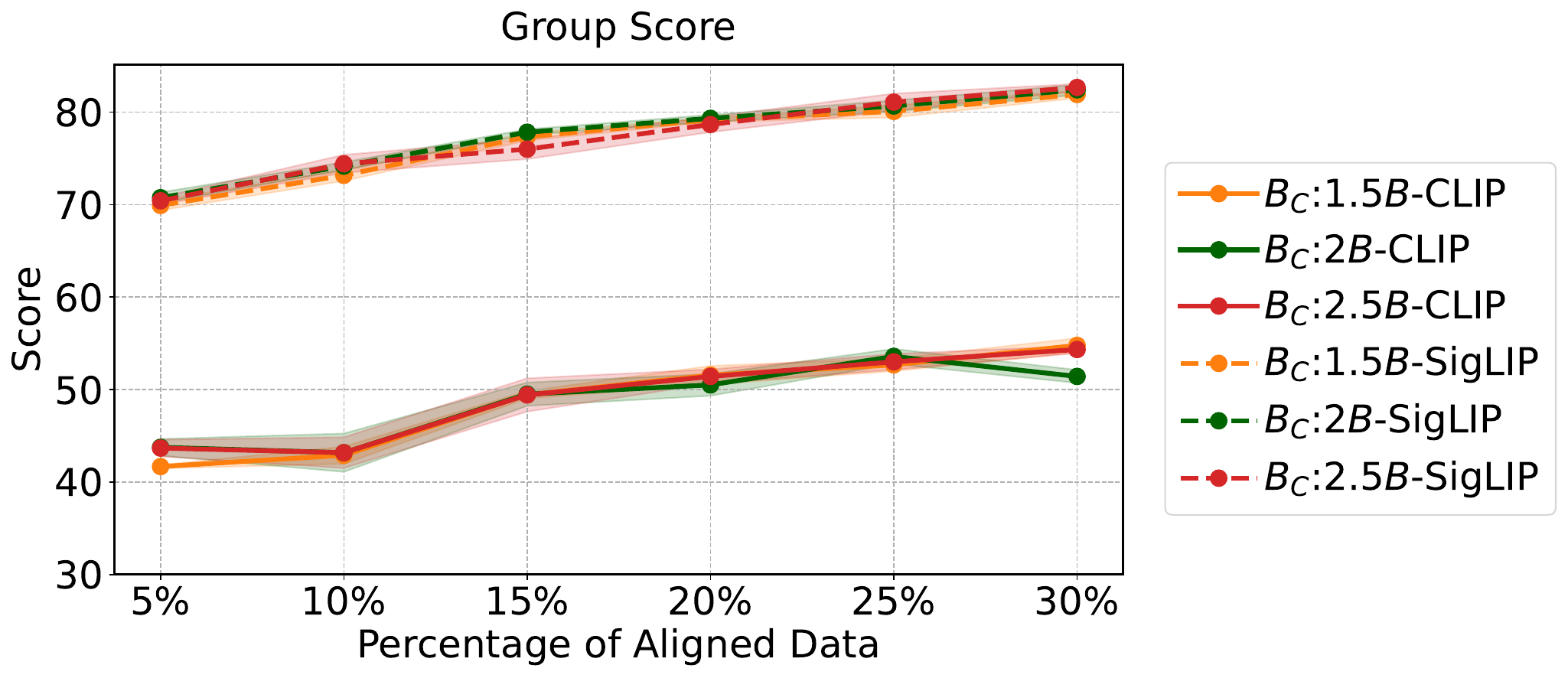} 
  \caption{Parameter study of \cref{alg:multimodal_AL} with different values of $B_C$ in the pool-based setting using CLIP-B32 and SigLIP-L16.  
  We report group scores as learning progresses.}
  \label{fig:ablation_2}
\end{figure}
\paragraph{Robustness to missing data and alignments.}
While our primary experiments focus on settings where valid cross-modal alignments exist for all instances---allowing us to isolate the core challenges of multimodal active learning with unaligned data (\cref{sec:unique})---real-world datasets often contain missing data and alignments.
To evaluate robustness under such conditions, we conduct an experiment on a deliberately perturbed version of the ColorSwap dataset. Specifically, we retain all captions but remove half of the images, creating a setting where many captions no longer have valid visual counterparts. 
We further restrict annotation to proceed only from the image modality (for our method and all baselines), since certain captions cannot be aligned to any image. The results, reported in \cref{fig:analysis_1} (right), show that our method continues to achieve the best performance under this noisy setting, suggesting robustness to missing data and alignments.

\section{Conclusion}
\label{sec:conclusion}

We presented the first study of active learning in multimodal settings \emph{with unaligned data}, addressing the key challenges of bidirectional alignment and annotation across large cross-modal candidate spaces.
By integrating uncertainty- and diversity-based selection in a modality-aware design, we developed an efficient algorithm applicable to both pool-based and streaming-based scenarios.
Experiments on benchmark datasets show that our approach reduces annotation requirements by up to $40\%$ while maintaining model performance, highlighting the promise of active learning for scalable and cost-effective multimodal learning.

Our current method is specifically designed for multimodal representation learning, where supervision is provided through cross-modal alignments and training is typically contrastive. Extending our method to other multimodal paradigms---such as multimodal generative learning---would likely require non-trivial modifications to the acquisition strategy and uncertainty estimation. We view this as an important and promising direction for future work.

\bibliography{refs}

\newpage
\appendix
\section{Supporting Results from \cref{sec:methods}}
\label{app:analysis}

\propMultiAL*

\begin{proof}
We analyze the data acquisition complexity of \cref{alg:multimodal_AL} as follows.
\begin{itemize}
    \item \textit{Per-round complexity.}  
    We analyze the runtime of each major step in the algorithm:

    \begin{itemize}
        \item \textit{Line 5.}  
        The main computational cost arises from evaluating \cref{eq:maxmin_dist},  
        which involves iterating over both $\cD_t$ and $\cS_{t-1}$.  
        The resulting runtime is upper bounded by $O(\abs{\cD_t} \cdot \abs{\cS_{t-1}})$.

        \item \textit{Line 6.}  
        For coreset construction, we use the greedy approximation algorithm in \cref{alg:k-center}, which can be implemented in $O\left((B_C + \abs{\cS_{t-1}}) \cdot \abs{\cD_t}\right)$ time using a distance caching strategy: initially, we compute and cache the minimum distances between all candidate points in $\cD_t$ and the selection set $\cS_{t-1}$,  
incurring a runtime of $O(\abs{\cD_t} \cdot \abs{\cS_{t-1}})$.  
For subsequent iterations in the greedy selection process,  
we only need $O(\abs{\cD_t})$ operations to update the cache and select the next point.
\looseness=-1

        \item \textit{Lines 7--10.}  
        These steps perform uncertainty-based selection.  
        Computing uncertainty scores over $B_C$ coreset candidates against the other modality  
        requires $O(B_C \cdot \abs{\cD_t})$ operations.
    \end{itemize}

    Summing the contributions from each step,  
    the per-round data acquisition complexity is upper bounded by  
    \[
    O\left((B_C + \abs{\cS_{t-1}}) \cdot \abs{\cD_t}\right).
    \]

    \item \textit{Overall complexity.}  
    Since $\abs{\cS_{t-1}} = O(tB)$ and $\abs{\cD_t} \leq \abs{\cD}$,  
    the total complexity over $T$ rounds is
    \[
    O\left(\sum_{t=1}^{T} (B_C + \abs{\cS_{t-1}}) \cdot \abs{\cD_t}\right) 
    = O(T \cdot (B_C + TB) \cdot \abs{\cD}).
    \]
\end{itemize}
\end{proof}

\section{Other Details for Experiments}
\label{app:experiments}

\subsection{Additional Details and Baselines}
\label{app:exp_details_baselines}

\paragraph{Additional details on datasets.}  
We use the 2017 release of MS-COCO, which contains approximately 118K training images (train split) and 5K validation images (val split).\footnote{\url{https://cocodataset.org}}  
Since only these two splits provide captions, we use the train split for training and the val split for testing.  
Each image in MS-COCO is paired with five captions; in our experiments, we use the first caption for each image.

\paragraph{Baselines.}  
\label{sec:append_baselines}  
We provide full implementations of the two baseline methods used in our experiments:  
\Coreset (\cref{alg:baseline_coreset}) and \Marg (\cref{alg:baseline_uncertainty}).  
Both are adapted from diversity-based and uncertainty-based active learning algorithms originally developed for unimodal settings.  

In the multimodal setting, \cref{alg:baseline_coreset} randomly selects a modality and then constructs a coreset within that modality using a greedy algorithm.  
\cref{alg:baseline_uncertainty} computes margin-based uncertainty scores in both directions (text $\rightarrow$ image and image $\rightarrow$ text), and selects the top-$B$ most uncertain instances for multimodal annotation.  

To extend these baselines to the streaming setting, we adopt the same strategy as in \cref{alg:multimodal_AL}:  
line~\ref{line:current_stream} in both \cref{alg:baseline_coreset} and \cref{alg:baseline_uncertainty} is replaced with the current batch of stream data $\cD_t$, while the remainder of each algorithm is left unchanged.  
We use Euclidean distance as the default metric in \cref{alg:multimodal_AL}, \cref{alg:baseline_coreset}, and \cref{alg:baseline_uncertainty}.
Additional results using alternative distance metrics in our algorithm are provided in \cref{app:metric}.

\begin{algorithm}[htbp]
	\caption{Multimodal Coreset Selection}
    \label{alg:baseline_coreset}
	\renewcommand{\algorithmicrequire}{\textbf{Input:}}	\renewcommand{\algorithmicensure}{\textbf{Output:}}
	\newcommand{\algorithmicbreak}{\textbf{break}}
    \newcommand{\BREAK}{\STATE \algorithmicbreak}
	\begin{algorithmic}[1]
\REQUIRE 
Unaligned multimodal dataset $\cD=\{\cD^v,\cD^l\},$ number of iterations $T,$ per-round selection size $B$.
\STATE Initialize multimodal model $\phi_0=\{\phi^v_0,\phi^l_0\}$ with random or pretrained weights.
\STATE Initialize the annotation set $\cS_0=\emptyset.$

\FOR{$t= 1, \cdots, T$}
\STATE Consider unaligned data pool $\cD_t\ldef  \cD \setminus \cS_{t-1}$.
\label{line:current_stream}
\STATE Randomly select a modality $k_t \in \{v,l\}$.
\FOR{$c=1,\cdots,B$}
\STATE   $z_u =  \argmax_{z_i \in \phi(\cD_t^{k_t})} \, \min_{z_j \in \phi(\cS_{t-1}^{k_t})} \distmath(z_i, z_j)$.\label{line:base_coreset_data_selectionway}

\STATE Annotate $z_u$ and add $\prn{z_i^v, z_i^l}$ to $\cS_t$.
\ENDFOR
\STATE Train multimodal model $\phi_t=(\phi^v_t,\phi^l_t)$ on the updated annotation set $\cS_t.$
\ENDFOR
    \ENSURE Actively trained multimodal model $\phi_T=(\phi^v_T,\phi^l_T)$.
	\end{algorithmic}
\end{algorithm}
\begin{algorithm}[htbp]
	\caption{Multimodal Uncertainty-based Data Selection}
    \label{alg:baseline_uncertainty}
	\renewcommand{\algorithmicrequire}{\textbf{Input:}}	\renewcommand{\algorithmicensure}{\textbf{Output:}}
	\newcommand{\algorithmicbreak}{\textbf{break}}
    \newcommand{\BREAK}{\STATE \algorithmicbreak}
	\begin{algorithmic}[1]
\REQUIRE Unaligned multimodal dataset $\cD=\{\cD^v,\cD^l\}$, number of iterations $T,$ per-round selection size $B.$
\STATE Initialize multimodal model $\phi_0=\{\phi^v_0,\phi^l_0\}$ with random or pretrained weights.
\STATE Initialize the annotation set $\cS_0=\emptyset.$
\FOR{$t=1,\cdots,T$}
\STATE Consider unaligned data pool $\cD_t\ldef  \cD \setminus \cS_{t-1}$

    \STATE For each modality $k \in \crl{v, l}$ and for each data $x_i^{k} \in \cC_t^{k}$, compute its margin score $u(x_i^{k}) \ldef w^{i}_{(1)} - w^i_{(2)}$, which serves as an uncertainty measure. Here $w^i \in \bbR^{\abs{\cD_t^m}}$ denotes the vector of similarity scores between $x_i^{k_t}$ and all unaligned features in the other modality $m \ldef \crl{v, l} \setminus \crl{k}$, and $w^i_{(j)}$ denotes the $j$-th largest entry of $w^i$.  
\algcommentlight{Calculate the margin scores from both directions.}

    \STATE Select $B$ data points with smallest margin scores with respect to $\crl{x^v_i}_{i=1}^{\abs{\cD^{v}_t}} \cup \crl{x^{l}_j}_{j=1}^{\abs{\cD^{l}_t}}$, annotate them, and update $\cS_t \gets \cS_{t-1} \cup \crl{\prn{x_i^v, x_i^l}}_{i=1}^B$.
    \algcommentlight{If, within the top-$B$ selection, a selected image is paired with another selected text, then continue the selection process until getting $B$ aligned pairs.}

\STATE Train multimodal model $\phi_t=(\phi^v_t,\phi^l_t)$ on the updated annotation set $\cS_t.$
\ENDFOR
    \ENSURE Actively trained multimodal model $\phi_T=(\phi^v_T,\phi^l_T)$.

	\end{algorithmic}
\end{algorithm}

\subsection{Hyperparameter Settings}
\label{sec:appendix_Hyperparameter_Settings}

\cref{tab:hypers_color_clip,tab:hypers_color,tab:hypers_datacomp} list the hyperparameters used in our experiments across datasets and model variants.  

\begin{table*}[hbtp]
\caption{Hyperparameter settings for the ColorSwap dataset using CLIP model.}
\label{tab:hypers_color_clip}
\centering
\begin{tabular}{l c c }
\toprule
Hyperparameters & CLIP-B32  & CLIP-L14 \\
\midrule
Epochs & 50 & 50  \\
Batch Size & 70 & 35 \\
  Optimizer &  AdamW  & AdamW  \\
Weight Decay &0.1& 0.1  \\
Learning Rate & $2 \times 10^{-5}$ & $1 \times 10^{-5}$  \\

$B_C$ & $2.5B$ & $2.5B$  \\
\bottomrule
\end{tabular}
\end{table*}

\begin{table*}[hbtp]
\caption{Hyperparameter settings for the ColorSwap dataset using SigLIP and LiT.}
\label{tab:hypers_color}
\centering
\begin{tabular}{l c c c}
\toprule
Hyperparameters & SigLIP-B16  & SigLIP-L16 &LiT-L14 \\
\midrule
Epochs & 80 & 80& 50 \\
Batch Size & 70 & 35&35 \\
Optimizer & AdamW &  AdamW   &  AdamW\\
Weight Decay &0.1& 0.1 & 0.1 \\
Learning Rate & $2 \times 10^{-5}$ & $1 \times 10^{-5}$& $1 \times 10^{-5}$ \\

$B_C$ & $1.5B$ & $2.0B$ & $2.5B$\\
\bottomrule
\end{tabular}
\end{table*}

\begin{table*}[hbtp]
\caption{Hyperparameter settings for the MS-COCO and DataComp datasets using CLIP model.}
\label{tab:hypers_datacomp}
\centering
\begin{tabular}{l c c }
\toprule
Hyperparameters & MS-COCO  & DataComp \\
\midrule
Epochs & 10 & 1  \\
Batch Size & 256 & 512 \\
Optimizer & AdamW &  AdamW   \\
Weight Decay &$1 \times 10^{-4}$&  0.2 \\
Learning Rate & $1 \times 10^{-5}$ & $6.25 \times 10^{-5}$   \\
$B_C$ & $2B$ & $2.5B$  \\
\bottomrule
\end{tabular}
\end{table*}

\section{Additional Experimental Results}  

  \subsection{Additional Results on Different CLIP Variants and Model Sizes}
\label{app:additional_exp}
We present additional experimental results in \cref{fig:color_L14,fig:color_sigmoid_large,fig:color_LiT_txt_Large}.  
The conclusions from the main results in \cref{sec:main_results} remain consistent across different CLIP variants and model sizes:  
\cref{alg:multimodal_AL} continues to outperform all baselines.

\begin{figure}[htbp]
\centering
\includegraphics[width=\textwidth]{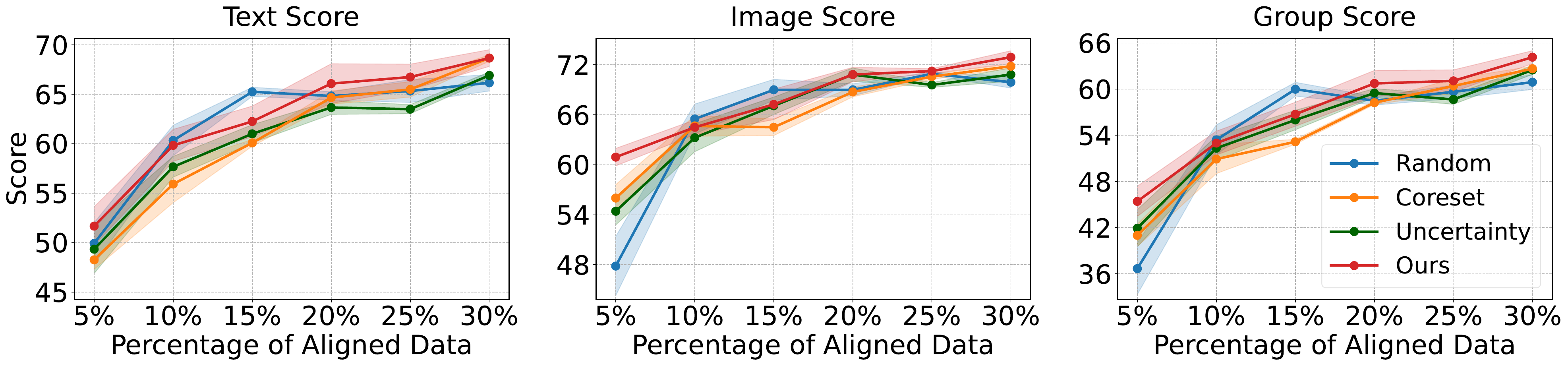}
\caption{Results of pool-based multimodal active learning on the ColorSwap dataset with CLIP-L14. We report text score (\emph{left}), image score (\emph{middle}), and group score (\emph{right}) as learning progresses.
}
\label{fig:color_L14}
\end{figure}

\begin{figure}[htbp]
\centering
\includegraphics[width=\textwidth]{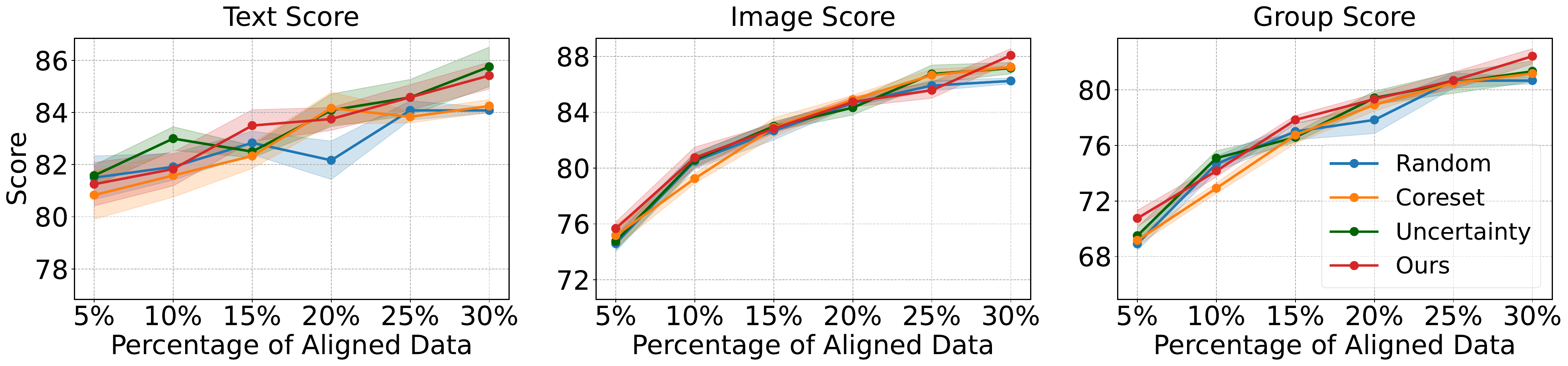}
\caption{Results of pool-based multimodal active learning on the ColorSwap dataset with SigLIP-L16. We report text score (\emph{left}), image score (\emph{middle}), and group score (\emph{right}) as learning progresses.
}

\label{fig:color_sigmoid_large}
\end{figure}

\begin{figure}[htbp]
\centering
\includegraphics[width=\textwidth]{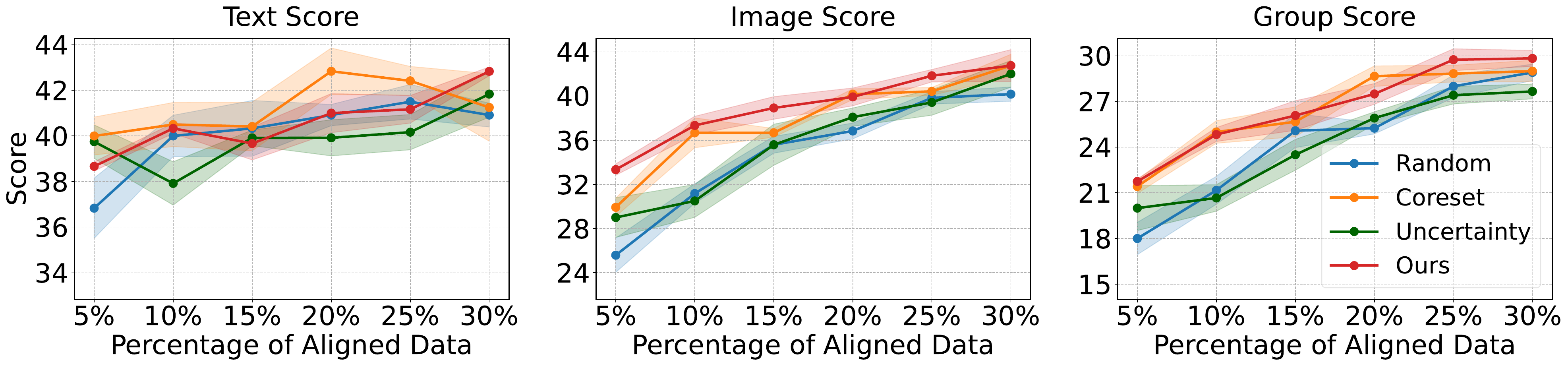}
\caption{Results of pool-based multimodal active learning on the ColorSwap dataset with LiT-L14. We report text score (\emph{left}), image score (\emph{middle}), and group score (\emph{right}) as learning progresses.
}

\label{fig:color_LiT_txt_Large}
\end{figure}

  \subsection{Streaming Active Learning with Different Streaming Buffer Sizes}
  \label{app:exp_streaming_batch}

  We conduct experiments on the DataComp dataset under two streaming buffer sizes, 1024 and 2048. 
  In both cases, we select 25\% of the data in the buffer, resulting in training batch sizes of 256 and 512, respectively.
  As shown in \cref{fig:datacomp_diff_size} (left and middle), our method achieves the best performance in both cases, indicating robustness to the choice of streaming buffer size.

  \subsection{Learning with Different Distance Metrics}
  \label{app:metric}
In our main experiments, we use the Euclidean distance as the default metric. To examine the effect of alternative distance functions, we additionally conducted experiments using cosine distance, defined as $d_{\mathsf{cos}} (x, y) = 1 - \mathsf{CosineSimilarity} (\frac{x}{||x||}, \frac{y}{||y||})$. 
As shown in \cref{tab:comparison_cosine}, our method exhibits comparable performance under cosine distance, achieving similar trends and performance levels to those obtained with Euclidean distance. This suggests that our method is robust to the choice of distance metric.
\begin{table}[htbp]
\centering
\caption{Results of pool-based multimodal active learning under different distance metrics on the ColorSwap dataset using SigLIP-B16.}
\label{tab:comparison_cosine}
\begin{tabular}{lcccccc}
\toprule
\textbf{Distance Metric} & \textbf{5\%} & \textbf{10\%} & \textbf{15\%} & \textbf{20\%} & \textbf{25\%} & \textbf{30\%} \\
\midrule
Euclidean Distance     & 67.34\scriptsize$\pm$2.36  & 69.00\scriptsize$\pm$2.02 & 71.83\scriptsize$\pm$1.97 & 74.25\scriptsize$\pm$2.50 & 76.92\scriptsize$\pm$1.59 & 79.42\scriptsize$\pm$2.94 \\
Cosine Distance & 65.17\scriptsize$\pm$2.67 & 68.17\scriptsize$\pm$2.14 & 74.17\scriptsize$\pm$2.09 & 74.67\scriptsize$\pm$1.96 & 77.83\scriptsize$\pm$1.73 & 78.67\scriptsize$\pm$2.27 \\
\bottomrule
\end{tabular}
\end{table}

  \subsection{Comparison against Additional Adapted Uniform AL Baselines}
  \label{app:additional_baseline}
Beyond adapting the unimodal \Coreset and \Marg baselines to our setting, we additionally evaluate an adapted version of the unimodal \badge algorithm \citep{Ash2020Deep}. 
We adapt \badge in the most direct manner: we first randomly select a modality and then apply \badge as if operating in a unimodal setting with classification losses.
Results in \cref{fig:datacomp_diff_size} (right) indicate that our method consistently outperforms this adapted BADGE baseline.

Although both \badge and our approach combine uncertainty and diversity principles, we hypothesize that \badge’s weaker performance stems from a fundamental mismatch with our problem structure. Specifically, \badge performs clustering in a gradient embedding space of dimension $O(Kd)$, where $d$ is the embedding dimension and $K$ is the number of classes. While $K$ is typically small in unimodal classification tasks, in multimodal learning with bidirectional alignment, $K$ effectively scales with the number of data points, since texts and images are generally unique. This issue is highlighted in \cref{sec:unique} as a core challenge of our setting. The resulting extremely high-dimensional gradient embeddings make \badge’s clustering step ineffective in practice.
In addition, \badge incurs quadratic per-round computational complexity in the number of data points, whereas our method is explicitly designed to achieve linear per-round complexity, which is critical for scalability in large multimodal datasets.

\begin{figure}[t]
\centering
\includegraphics[width=.65\linewidth]{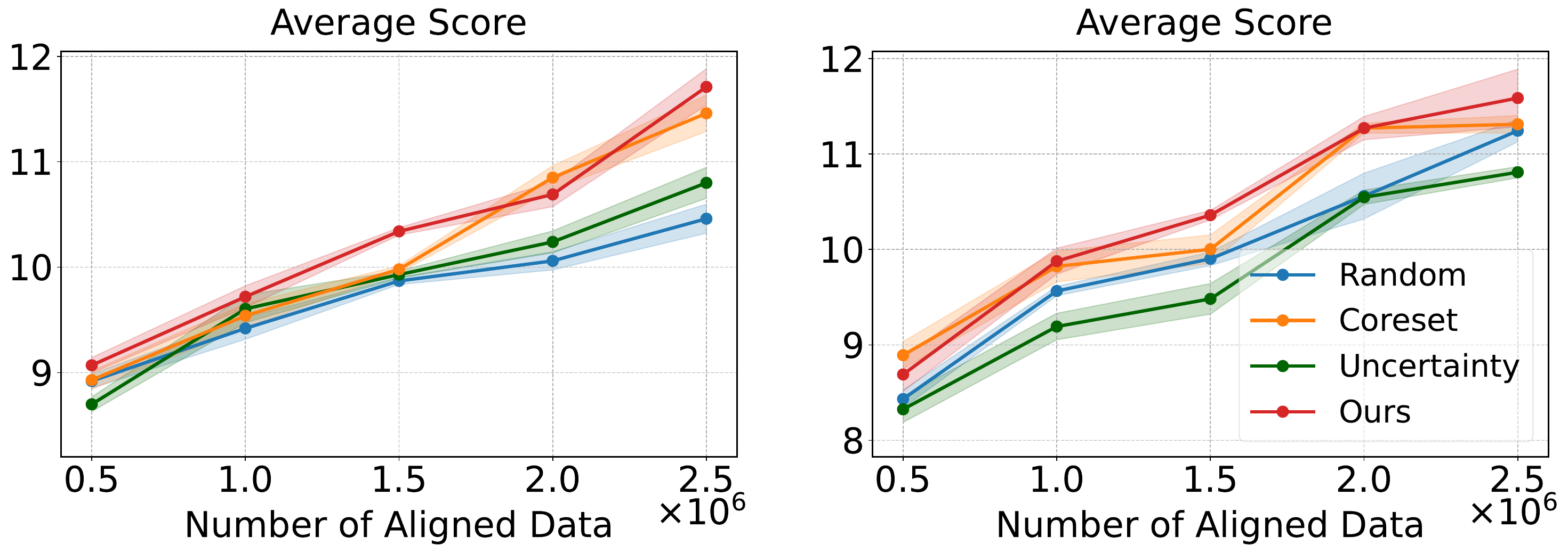}\hfill
\includegraphics[width=.33\linewidth]{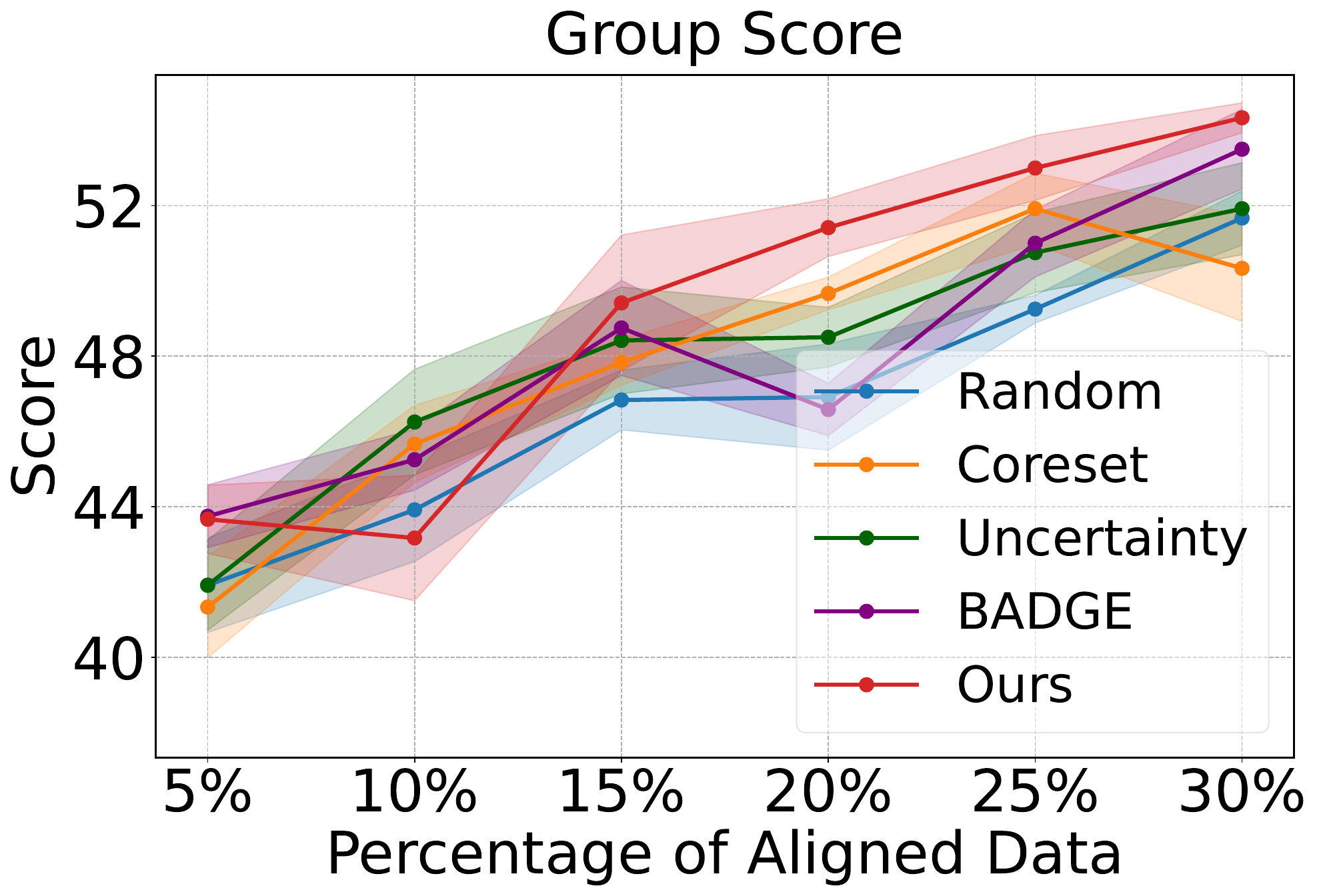}\hfill
\caption{
\emph{Left and  Middle:}
Results of streaming-based multimodal active learning on DataComp (CLIP-B32) under two streaming buffer sizes: 1024 (\textit{left}) and 2048 (\textit{middle}). We report the average score across 38 downstream tasks as learning progresses. 
\emph{Right:}
Results of pool-based multimodal active learning on ColorSwap (CLIP-B32): comparison with an adapted \badge baseline.
}
\label{fig:datacomp_diff_size}
\end{figure}

\end{document}